%% file: paper.tex
\documentclass{article}
\usepackage[preprint]{neurips_2026}

\usepackage{microtype}
\usepackage{graphicx}
\usepackage{subfig}
\usepackage{hyperref}

\usepackage{amsmath}
\usepackage{amssymb}
\usepackage{mathtools}
\usepackage{amsthm}
\usepackage{thmtools,thm-restate}

\theoremstyle{plain}
\newtheorem{theorem}{Theorem}[section]

\newtheorem{corollary}[theorem]{Corollary}
\theoremstyle{definition}

\theoremstyle{remark}

\usepackage[textsize=tiny]{todonotes}

\usepackage{latexsym}
\usepackage{wrapfig}
\usepackage{algorithm}
\usepackage{algorithmic}
\usepackage{amsmath,amssymb}
\usepackage{xspace}
\usepackage{booktabs}
\usepackage{tcolorbox}
\usepackage{pgfplotstable, pgfmath}
\usepackage[noenumitem,notheorems]{eda}
\usepackage{bbm}
\usepackage{colortbl}
\usepackage{prettyplots}

\pgfplotsset{
    colormap/viridis,
}
\graphicspath{ {./figs/} }
\pgfplotsset{compat=1.18}
\usetikzlibrary{calc} 
\usetikzlibrary{positioning}
\usetikzlibrary{backgrounds}
\usetikzlibrary{external}
\usetikzlibrary{intersections}
\usepgfplotslibrary{fillbetween}
\usepgfplotslibrary{colorbrewer}
\usetikzlibrary{pgfplots.colormaps} 
\usetikzlibrary{pgfplots.statistics}
\usetikzlibrary{pgfplots.groupplots}
\usepgfplotslibrary{ternary}
\usetikzlibrary{fit}
\usetikzlibrary{arrows}
\usepackage{tikzviolinplots}
\usepackage{pgfplotstable}
\usepackage{scontents}
\usepackage{array}
\usepackage{numprint}
\usepackage{ifthen}
\usepackage{framed}
\usepackage{enumitem} 
\usepackage{wrapfig}
\usepackage{listings}
\usepackage{rotating}
\usepackage{color}

\usepackage[utf8]{inputenc}
\usepackage[T1]{fontenc}
\usepackage{hyperref}
\usepackage{url}
\usepackage{booktabs}
\usepackage{amsfonts}
\usepackage{nicefrac}
\usepackage{microtype}
\usepackage{xcolor}

\include{macro/defines}

\newcommand{\ourmaintitle}{Discovering Subgroups\\ with Exceptional Survival Characteristics}
\newcommand{\ourmethod}{\textsc{Sysurv}\xspace}

\newcommand{\RK}{\textsc{RuleKit}\xspace}
\newcommand{\FIBERS}{\textsc{Fibers}\xspace}
\newcommand{\EDS}{\textsc{EsmamDs}\xspace}

\title{\ourmaintitle}
\author{%
  Mhd Jawad Al Rahwanji, Sascha Xu, Nils Philipp Walter, Jilles Vreeken\\
  CISPA Helmholtz Center for Information Security \\
  Saarbr\"ucken, Germany \\
  \texttt{\{jawad.alrahwanji, sascha.xu, nils.walter, jv\}@cispa.de}
}

\begin{document}

\maketitle

\begin{abstract}
  \input{sections/abstract.tex}
\end{abstract}

\input{sections/introduction}
\input{sections/related}
\input{sections/theory}
\input{sections/experiments}
\input{sections/conclusion}

\begin{ack}
  \input{sections/acknowledgements.tex}
\end{ack}

\bibliographystyle{icml2026}
\bibliography{bib/abbreviations,bib/bib-paper}

\newpage
\appendix

\input{sections/appendix.tex}

\end{document}

%% file: macro/defines.tex
\newcommand{\weibull}{\ensuremath{\mathit{Weibull}}}

\pgfplotscreateplotcyclelist{sg-pop-colors}{
	{pr-color-gray},
	{pr-color1a},
}

\pgfplotscreateplotcyclelist{sg-sg-pop-colors}{
	{pr-color1b},
	{pr-color1d},
}

\pgfplotscreateplotcyclelist{ss-rk-pop-colors}{
	{pr-color1a},{pr-color1a, pr-conf-opac}, 
	{pr-color1b},{pr-color1b, pr-conf-opac},
	{pr-color1c},{pr-color1c, pr-conf-opac},
	{pr-color-gray},{pr-color-gray, pr-conf-opac},
}

\pgfplotscreateplotcyclelist{ss-eds-pop-colors}{
	{pr-color1a},{pr-color1a, pr-conf-opac}, 
	{pr-color1c},{pr-color1c, pr-conf-opac},
	{pr-color-gray},{pr-color-gray, pr-conf-opac},
}

\pgfplotscreateplotcyclelist{case-colors2}{
	{pr-color1a},{pr-color1a, pr-conf-opac}, 
	{pr-color1b},{pr-color1b, pr-conf-opac},
	{pr-color-gray},{pr-color-gray, pr-conf-opac},
	{pr-color1c},{pr-color1c, pr-conf-opac},
	{pr-color1d},{pr-color1d, pr-conf-opac},
}

\pgfplotscreateplotcyclelist{case-colors}{
	{pr-color1k},{pr-color1k, pr-conf-opac},
	{pr-color1f},{pr-color1f, pr-conf-opac},
	{pr-color-gray},{pr-color-gray, pr-conf-opac},
	{pr-color1p},{pr-color1p, pr-conf-opac},
	{pr-color1o},{pr-color1o, pr-conf-opac},
}

\pgfplotscreateplotcyclelist{illusrtration-colors}{
	{pr-color1b},
	{pr-color1d},
	{pr-color1f},
	{pr-color-gray},
}

\pgfplotscreateplotcyclelist{box1-colors}{
	{pr-color1a},
	{pr-color1b},
	{pr-color1d},
}

%% file: sections/abstract.tex
In many applications, it is important to identify subpopulations that survive longer or shorter than the rest of the population. In medicine, for example, it allows determining which patients benefit from treatment, and in predictive maintenance, which components are more likely to fail. Existing methods for discovering subgroups with exceptional survival characteristics rely on restrictive assumptions about the survival model (e.g. proportional hazards), require pre-discretized features, and, as they compare average statistics, tend to overlook individual heterogeneity. In this paper, we propose \ourmethod, a non-parametric, fully differentiable method that discovers human-readable rules selecting subgroups with exceptional survival characteristics. Empirical evaluation on a wide range of datasets and settings, including a case study on cancer data, shows that \ourmethod reveals insightful and actionable survival subgroups, outperforming the state of the art.

%% file: sections/introduction.tex
\section{Introduction}

Survival analysis traditionally focuses on estimating whether a \emph{given} group of individuals has different survival characteristics than a reference population. This has applications in many fields, but most obviously in precision medicine, as it allows characterizing patients who benefit from a treatment. What if the groups are not yet known?
Can we \emph{learn} subgroups that stand out in terms of survival characteristics? Can we learn \emph{easily interpretable} rules that select such subgroups?
Can we learn these without restrictive assumptions or pre-discretizing features, allowing even heavy censoring, while keeping individual heterogeneity in mind? 
That is exactly the topic of this paper.

We consider time-to-event data, where each sample $(\mathbf{x}, \delta, t)$ consists of features $\mathbf{x}$, an event indicator $\delta$, and an observed time $t$. The indicator $\delta$ denotes whether the event (e.g. death) occurred. If $\delta = 1$, then $t$ is the event time; if $\delta = 0$, then $t$ is the censoring or last follow-up time. We are interested in learning conjunctive rules composed of conditions on the features (e.g.~$x_1 \in [0.2,1.0] \mathbf{~and~} x_2 \in [0.8,0.9]$). \input{texfigs/example.tex}The rule specifies a subgroup with an exceptional survival characteristic.

For example, on the left in Figure \ref{fig:example}, we show the result of our method, \ourmethod, on neck cancer treatment data.
It learns that ``patients whose tumors have mutations related to DNA repair'' tend to respond very poorly to treatment, which is in line with medical understanding~\citep{barker:2015:tumour}. It is also actionable, as we can test, identify, and treat those patients differently.
On the right-hand side, we consider unemployment data. \ourmethod finds that ``people older than 58 \emph{with} unemployment insurance'' tend to have exceptionally long durations until re-employment (group 1), whereas ``people who previously earned a low wage and \emph{without} unemployment insurance'' tend to find new jobs much more quickly (group 2).

To find these subgroups, we use a \emph{non-parametric} measure of exceptionality based on \emph{individual} survival curves. That is, unlike the state of the art, we do \emph{not} rely on the Cox proportional hazards model, which would restrict us to finding subgroups that have the same functional shape as the overall population. Using individual survival curves also gives valuable information during optimization, i.e., which subjects to include or exclude. We show how to learn rules that single out subgroups with exceptional survival characteristics in a differentiable manner, automatically finding relevant features and intervals in a way that allows easy optimization, yet crisp logical interpretation after learning.

Through an extensive set of experiments on synthetic and real-world data, we show that our method outperforms state-of-the-art methods -- both in terms of recovering the ground truth, and in terms of exceptionality of the found subgroups. Through a case study on treatment of head and neck squamous cell carcinoma, we confirm that \ourmethod rediscovers biomarker associations with good, respectively, poor response to treatment, while also identifying novel subgroups that warrant further investigation.

In a nutshell, our main contributions are as follows
\begin{enumerate}[itemsep=0.1cm,parsep=0.0cm]
    \item We propose a differentiable, non-parametric measure of survival subgroup exceptionality based on individual survival estimates.
    \item We show how to differentiably learn rules that select subgroups with exceptional survival characteristics, and how to post hoc evaluate statistical significance.
    \item We empirically evaluate our method on a wide range of synthetic and real-world datasets, comparing to three state-of-the-art methods.
\end{enumerate}

We make all our code available online at \url{https://eda.group/sysurv}.

%% file: texfigs/example.tex
\begin{wrapfigure}[12]{r}{0.5\textwidth}
    \vspace{-10pt}
    \centering
    \scriptsize
    \begin{tikzpicture}
    \begin{groupplot}[
        group style={
            group size=4 by 1,
            horizontal sep=50pt,
        },
        ytick align=center,
        width=\textwidth/4.1,
        xlabel style={yshift=-5pt},
        ymin=0,
        ymax=1,
        xmin=0,
    ]
        \nextgroupplot[
        pretty line,
        cycle list name=sg-pop-colors,
        xlabel={Follow up time (mo)},
        ylabel style={yshift=5pt},
        xtick={0,25,50,75,100},
        ylabel={Survival probability},
        ]
            \addplot+[very thick] table[x=x, y=y] {data/exps/mmc6_nCounter_PostOp_Population_KM.tsv} node [sloped,below,inner sep=2pt,pos=0.77,rotate=-0] {Population};

            \addplot+[very thick] table[x=x, y=y] {data/exps/mmc6_nCounter_PostOp_3_KM_extended.tsv} node [sloped,below,inner sep=2pt,pos=0.95] {Found subgroup};

        \nextgroupplot[
        pretty line,
        xlabel={Unemployment time (w)},
        ylabel={Unemployment prob.},
        ylabel style={yshift=5pt},
        xtick={0,7,14,21,28},
        cycle list name=sg-sg-pop-colors,
        ]
            \addplot+[very thick] table[x=x, y=y] {data/exps/UnempDur_SySurv_1_KM.tsv} node [sloped,above,inner sep=1pt,pos=0.75,rotate=-0] {Group 1};

            \addplot+[very thick] table[x=x, y=y] {data/exps/UnempDur_SySurv_0_KM.tsv} node [below,inner sep=4pt,pos=0.2,rotate=0] {Group 2};

            \addplot+[very thick, draw=pr-color-gray] table[x=x, y=y] {data/exps/UnempDur_SySurv_2_KM.tsv} node [sloped,above,pr-color-gray,inner sep=3pt,pos=0.5,rotate=5] {Population};
        \end{groupplot}
    \end{tikzpicture}
    \caption{\emph{Survival subgroups}. \ourmethod finds patients suffering from a therapy-resistant tumor \textbf{(Left)}, and people with exceptionally long (1) resp. short (2) durations until re-employment \textbf{(Right)}.}
    \label{fig:example}
\end{wrapfigure}

%% file: sections/related.tex
\section{Related Work}

In this section, we discuss the most relevant related work from survival analysis, survival clustering, subgroup discovery, and survival subgroup discovery.

\textbf{Survival Analysis.} Most ML research on time-to-event data focuses on individual outcome prediction, where ensemble methods--including Random Survival Forests--represent the state of the art~\citep{ishwaran:2008:random,barnwal:2022:survival,bertsimas:2022:optimal,zhang:2024:optimal,huisman:2024:optimal}. Alternatively, \citet{yu:2011:learning,fotso:2018:deep, li:2025:survformer} explored neural networks for estimating survival functions. 
Such models can also naturally incorporate additional modalities, including images and sequences~\citep{cheerla:2019:deep,vale:2021:long,meng:2022:deepmts,meng:2023:merging,saeed:2024:survrnc,farooq:2025:survival}. \ourmethod does not compete with these methods, but rather needs one to obtain individual survival estimates.

\textbf{Survival Clustering} aims to assign subjects to groups with high within-group and low between-group survival similarity, making it a related problem; however, it does not describe or identify these groups. Recent methods use generative models to parameterize latent mixtures of Weibull distributions, \citet{manduchi:2022:deep}, for example, uses variational autoencoders, while \citep{hou:2024:interpretable} employs multilayer perceptrons. Other methods assume proportional hazards \citep{nagpal:2021:deep}, use a multitask framework \citep{cui:2024:deep}, or leverage the latent vector quantization framework \citep{de:2024:survivallvq}, but unlike \ourmethod, these typically make specific assumptions about the underlying distributions or structure, which are then interpreted as data-driven subgroups, a process yielding uninterpretable results compared to rule induction.

\textbf{Subgroup Discovery,} first introduced by \citet{klosgen:1996:explora}, aims to find and describe subpopulations that are exceptional in terms of a target property. Typically, subgroup discovery methods use exceptionality measures tailored to the target variable's data type \citep{song:2016:subgroup,kalofolias:2022:naming}. 
Furthermore, some works make strong assumptions on the distribution of the target variable \citep{friedman:1999:bump,lavravc:2004:subgroup,grosskreutz:2009:subgroup}.
Exceptional model mining \citep{leman:2008:exceptional,duivesteijn:2016:exceptional} extends subgroup discovery to arbitrary target types by measuring the difference between subgroup- and population-level models. While most subgroup discovery methods use combinatorial search, \citet{xu:2024:syflow} showed that gradient-based optimization can efficiently learn subgroups in large datasets without pre-discretization.

\textbf{Survival Subgroup Discovery.} Existing approaches for survival subgroup discovery alter the exceptionality measure, and are typically based on the logrank statistic~\citep{mantel:1966:evaluation}. \RK \citep{gudys:2020:rulekit} relies on heuristic search, while \FIBERS \citep{urbanowicz:2023:fibers} is evolutionary algorithm-based, and, more recently, \EDS \citep{vimieiro:2025:esmamds} was based on Ant Colony Optimization. \citet{relator:2018:survivallamp} explore significant survival subgroup discovery by correcting for multiple testing, albeit at great computational cost.  In contrast, we propose a gradient-descent-optimizable objective that uses flexible survival functions without making assumptions about how subgroup and population survival relate. We also provide statistical guarantees for our subgroups.

%% file: sections/theory.tex
\section{Discovering Survival Subgroups}

We are interested in learning rules that select subgroups with survival trends that are exceptional compared to those of the general population. For that, we need 3 interdependent ingredients: a way to model subgroup survival, a measure for subgroup exceptionality, and a way to learn these rules. We start with notation, and then present each of these ingredients in turn.

\subsection{Notation}

We consider time-to-event (survival) data, where we have a dataset $D=\{(\mathbf{x}^{(i)},t^{(i)},\delta^{(i)})\}^n_{i=1}$ consisting of $n$ i.i.d. realizations, called \emph{subjects}, from a joint distribution $\mathbb{P}(\mathbf{X}, T, \Delta)$. Here, $\mathbf{x} \in \mathbb{R}^p$ is a vector of $p$ covariates that describes the subject, $t \in \mathbb{R}_{\geq0}$ measures time since the start of its observation, and $\delta$ indicates whether the event of interest (e.g.~relapse) has occurred yet. If the event did occur ($\delta = 1$), $t$ is the time at which it was observed. If it did not occur, $t$ is the latest time we observed the subject to still be fine, and so the outcome is said to be \emph{censored}. A survival function $S(t)$ gives the probability of a subject surviving beyond time $t$, i.e.~the complementary cumulative distribution function $\mathbb{P}(T > t)$. To ease notation, we denote the domains of $\mathbf{x}$ by $\mathcal{X}$ and $t$ by $\mathcal{T}$.

A subgroup $Q$ is a set of subjects selected from $D$ using a rule $\sigma_Q\colon\mathcal{X}\to\{0,1\}$ that indicates whether a subject belongs to $Q$ or not, i.e.~$Q=\{(\mathbf{x},t,\delta)\in D\mid\sigma_Q(\mathbf{x})=1\}$. The survival function of subgroup $Q$ is denoted by $\hat{S}_Q(t)$.

\subsection{Subgroup Survival Model}

First, we need to specify a way to model subgroup survival. We adopt the approach in precision medicine, which overcomes the traditional “one-size-fits-all” model by incorporating individual characteristics into treatment outcome predictions \citep{feuerriegel:2024:causal}. This is particularly important in survival analysis where subject heterogeneity can significantly impact outcomes. Thus, this translates to estimating individual survival.

Individual survival functions $\hat{S}(t\mid\mathbf{X=x})$, or $\hat{S}(t\mid\mathbf{x})$ for short, can leverage the covariates $\mathbf{x}$ of a subject and so provide more accurate and more robust survival estimates~\citep{cox:1972:regression} than those estimated marginally~\citep{kaplan:1958:nonparametric}. Specifically important here is that individual survival functions provide fine-grained signal that we can use during optimization, i.e.~to determine which subjects should (not) be part of the subgroup. 
Given individual estimates, the survival function for a subgroup $Q$ selected by rule $\sigma_Q$ is defined as 
\[
    \hat{S}_Q(t)\coloneq\mathbb{E}_{\mathbf{x}\sim P_\mathbf{X}}[\hat{S}(t\mid\mathbf{x})\mid\sigma_Q(\mathbf{x})=1]\;.
\]

To obtain individual survival functions, we fit a non-parametric conditional estimator, or population model, $M$ over the entire dataset. For our instantiation, we use the random survival forest (RSF) \citep{ishwaran:2008:random}. This is a fully-non-parametric continuous-time model that extends the random forest \citep{breiman:2001:random} to time-to-event data. In contrast to well-known parametric models, it does not restrict us to specific survival distributions (e.g.~Weibull) or structures (e.g.~Cox) and allows us to find subgroups with exceptional survival characteristics in general.

\subsection{Subgroup Exceptionality Measure}

Having specified a way to model subgroup survival, we need to define a way to measure the exceptionality of a given subgroup versus a reference group. In traditional survival analysis, the groups of interest are assumed to be \emph{given} and exceptionality is measured on average, at the group level. As we will detail below, this has drawbacks in terms of sensitivity to per-subject deviations when we aim to \emph{learn} the subgroup.

\input{texfigs/illustration.tex}
First, we show in Fig.~\ref{fig:illustration} (left) how individual estimates can \emph{reveal} differences from the reference survival that group-level estimates \emph{obscure}. The dark green line shows the group-level average over four individuals (light green lines). The average curve obscures that the curves of subjects 1 and 2 are very different, i.e.~delayed, compared to those of subjects 3 and 4, effectively underestimating the true exceptionality. In general, individual-level exceptionality measures offer increased sensitivity to deviations of individual survival functions, such as those crossing the reference, making them particularly suited for optimization of subgroup membership rules.

Independent of whether we measure exceptionality at the group or individual level, it is important to consider the absolute difference to the reference. We give an example in Fig.~\ref{fig:illustration} (right). While it is trivial to identify groups 1 and 2 as exceptional even when we measure relative to the reference, the large exceptionality of group 3 only becomes clear when we measure the difference in absolute terms.

Next, we formalize these intuitions, and show that considering absolute differences of individual estimates to the reference gives us more signal than the absolute difference in group averages. 

\begin{restatable}{proposition}{Individual}
    \label{prop:individual}
    Given two groups $A$ and $B$, selectable by rule $\sigma_A$ and $\sigma_B$, respectively, for which the expected group-level survival at any time $t$ are $\hat{S}_A(t)$ and  $\hat{S}_B(t)$, and for which individual-level survival is denoted by $\hat{S}(t\mid\mathbf{x})$. The expected absolute difference in survival of the subjects selectable by $\sigma_A$ is as, or more, sensitive than the absolute difference of $\hat{S}_A(t)$ from $\hat{S}_B(t)$,
    \[
        \mathbb{E}_{\mathbf{x}\sim P_\mathbf{X}}[\ell^1_t(s(\mathbf{x}),s_B)\mid\sigma_A(\mathbf{x})=1]\geq\ell^1_t(s_A,s_B)\;,
    \]
    where $\ell^1_t(\cdot,\cdot)$ is an absolute difference measure, and we write $s_\circ$ for $\hat{S}_\circ(t)$, and $s(\mathbf{x})$ for $\hat{S}(t\mid\mathbf{x})$.
\end{restatable}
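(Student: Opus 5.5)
This inequality is Jensen's inequality applied to the convex function $u\mapsto |u - s_B|$. We first make the definitions explicit. By the subgroup survival model, $s_A = \hat{S}_A(t) = \mathbb{E}_{\mathbf{x}\sim P_\mathbf{X}}[s(\mathbf{x})\mid\sigma_A(\mathbf{x})=1]$, and $s_B = \hat{S}_B(t)$ is a fixed number that does not depend on $\mathbf{x}$. We read $\ell^1_t(a,b)=|a-b|$ at a fixed time $t$. The same argument covers any absolute-difference measure of the form $\phi(a-b)$ with $\phi$ convex, and any integrated version over $t$.

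The proof then has three steps. First, fix $t$ and condition on the event $\{\sigma_A(\mathbf{x})=1\}$. We assume this event has positive probability, so the conditional expectation is well defined. Since $s(\mathbf{x})\in[0,1]$, all expectations are finite. Second, the map $g(u)=|u-s_B|$ is convex, being a translate of the absolute value. Jensen's inequality under the conditional distribution of $\mathbf{x}$ given $\sigma_A(\mathbf{x})=1$ gives
\[
\mathbb{E}\big[|s(\mathbf{x})-s_B| \,\big|\, \sigma_A(\mathbf{x})=1\big] \;\geq\; \big|\mathbb{E}[s(\mathbf{x})\mid\sigma_A(\mathbf{x})=1]-s_B\big| \;=\; |s_A-s_B|.
\]
One can also see this directly: write $s(\mathbf{x})-s_B$ inside the expectation and apply the triangle inequality $|\mathbb{E}Z|\le\mathbb{E}|Z|$ with $Z=s(\mathbf{x})-s_B$. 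Third, if exceptionality is aggregated over time, for example $\int_{\mathcal{T}}\ell^1_t\,dt$ or any nonnegative weighting of it, we integrate the pointwise inequality over $t$. Tonelli's theorem lets us swap the integral over $t$ with the conditional expectation, which is justified because the integrand is nonnegative and bounded. The inequality survives this integration.

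It is also worth noting when equality holds, since that explains what "more sensitive" means. Equality holds exactly when $s(\mathbf{x})-s_B$ has almost surely constant sign on the subgroup. This is the case where individual curves do not cross the reference. When curves cross, as in the left panel of Fig.~\ref{fig:illustration}, the inequality is strict.

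The only real obstacle is interpretive. The statement does not pin down $\ell^1_t$, so the proof needs to fix it as the pointwise absolute difference, or more generally a convex function of the difference, for Jensen's inequality to apply. The empirical version, where $P_\mathbf{X}$ is replaced by the empirical distribution over $D$, follows in exactly the same way by the triangle inequality for finite sums.
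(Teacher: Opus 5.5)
Your proof is correct and is essentially the paper's own argument. The paper sets $Z=s(\mathbf{X})-s_B$, applies $\mathbb{E}[|Z|]\geq|\mathbb{E}[Z]|$ (Jensen/triangle inequality) conditionally on $\sigma_A(\mathbf{x})=1$, and then uses linearity of expectation together with $s_A=\mathbb{E}[s(\mathbf{x})\mid\sigma_A(\mathbf{x})=1]$. Your equality characterization (almost surely constant sign of $s(\mathbf{x})-s_B$ on the subgroup) matches the paper's Corollary, and the Tonelli extension to $\ell^1_{\mathcal{T}}$ is a harmless addition. One caveat on your gloss: strictness at a fixed $t$ comes from subjects lying on both sides of $s_B$ at that $t$, not from individual curves crossing the reference over time.
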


We provide the proof, as well as an informative Corollary, in Appx.~\ref{apx:proof}. For our purposes, $s_A$ would typically belong to a group of interest, e.g.~a subgroup, and $s_B$ to a reference group, e.g.~the entire dataset. To measure the exceptionality between two survival estimates $s_A,s_B\geq0$ at some time $t$, we consider the $L^1$ distance $\ell^1_t(s_A,s_B)\coloneq|s_A-s_B|$.

To capture the survival interplay between groups, we leverage individual estimates over the entire time domain $\mathcal{T} = [0, t_{\max}]$ by extending the absolute difference measure $\ell^1_t(\cdot,\cdot)$ for two survival functions $S_A,S_B$ as $\ell^1_\mathcal{T}(S_A,S_B)\coloneq\int_{t\in\mathcal{T}}\left|S_A(t)-S_B(t)\right|dt$. Finally, we define our exceptionality measure as the expected difference in survival of the subjects of a subgroup selectable by $\sigma$ from the estimated survival in the population $\hat{S}_D(t)$ throughout $\mathcal{T}$ as
\begin{equation}
    \label{eq:group_abs_dev}
    \phi(\sigma,\sigma_D)\coloneq\mathbb{E}_{\mathbf{x}\sim P_\mathbf{X}}[\ell^1_\mathcal{T}(\hat{S}(t\mid\mathbf{x}),\hat{S}_D(t))\mid\sigma(\mathbf{x})=1]\;.
\end{equation}
In our experiments, we will compare to methods that rely on the logrank for measuring exceptionality, a signed, group-level statistic, widely used in survival analysis.

\subsection{Rule Learner}

Armed with our subgroup survival model and exceptionality measure, we now present \ourmethod for learning rules that select subgroups with exceptional survival characteristics, via gradient-descent.

\textbf{Learnable Rules.} Subgroups are   selected via rules, so we first formalize the language of these rules. Traditionally, a subgroup is defined by a hard rule $\sigma\colon\mathbf{x}\mapsto\bigwedge^p_{j=1}\pi(x_j;\alpha_j,\beta_j)$, where each $\pi$ is a Boolean condition evaluating to true (1) if a covariate $x$ falls within the  interval defined by lower and upper bounds $\alpha,\beta\in\mathbb{R}$, e.g.~\emph{“$18 < \text{age} < 32$”}. To integrate rule induction into a gradient-based optimization pipeline, we employ a continuous relaxation of these logical expressions.

To facilitate differentiable rule induction, we use soft rules $\hat{\sigma}\colon\mathcal{X}\to[0,1]$ consisting of soft conditions $\hat{\pi}\colon\mathbb{R}\to[0,1]$. These soft conditions model the probability of a covariate being inside the specified interval via a composition of two opposing sigmoids located at the learnable bounds $\alpha$ and $\beta$. A temperature hyperparameter $\tau>0$ controls the strictness of the bounds; as $\tau\to 0$, the soft condition $\hat{\pi}$ converges to a Boolean interval (Fig.~\ref{fig:box1}).

Following \citet{xu:2024:syflow}, we define the soft condition as
\[
    \hat{\pi}(x_j;\alpha_j,\beta_j,\tau)\coloneq\frac{e^{\frac{1}{\tau}(2x_j-\alpha_j)}}{e^{\frac{1}{\tau}x_j}+e^{\frac{1}{\tau}(2x_j-\alpha_j)}+e^{\frac{1}{\tau}(3x_j-\alpha_j-\beta_j)}}\;,
\]
and the differentiable rule learner using a weighted harmonic mean of soft conditions as 
\[
    \hat{\sigma}(\mathbf{x}; \boldsymbol{\alpha}, \boldsymbol{\beta}, \mathbf{w}, \tau)\coloneq\frac{\sum^p_{j=1}w_j}{\sum^p_{j=1}w_j\;\hat{\pi}(x_j;\alpha_j,\beta_j,\tau)^{-1}}\;,
\]
where $\mathbf{w}\in\mathbb{R}^p$ represents a vector of learnable weights. These weights allow the model to perform feature selection, i.e.~a covariate $x$ is actively included in the conjunction when $w>0$ and effectively ignored as $w\to0$. The harmonic mean structure is chosen because it serves as a smooth approximation of the logical-and operator; if any single active condition $\hat{\pi}$ approaches zero, the entire rule output $\hat{\sigma}$ tends toward zero. Henceforth, the output $\hat{\sigma}(\mathbf{x})$ can be interpreted as the probability $\mathbb{P}(\hat{\sigma}(\mathbf{X})=1\mid\mathbf{X=x})$ that a subject belongs to the subgroup (yellow box in Fig.~\ref{fig:box2}).

\textbf{Objective.} Our goal is to learn those rules to select the subgroups that maximize our objective. For rule updates, we need to take the gradient of our objective w.r.t the parameters of the soft rule $\hat{\sigma}$. Hence, we rewrite the expectation in Eq.~\eqref{eq:group_abs_dev} to derive our objective where $\hat{\sigma}$ explicitly appears as
\begin{align}
    \label{eq:diff_group_abs_dev}
    &\phi(\hat{\sigma},\hat{\sigma}_D)=\int_{\mathbf{x}\in\mathcal{X}}\ell^1_\mathcal{T}(\hat{S}(t\mid\mathbf{x}),\hat{S}_D(t))\;\frac{\mathbb{P}(\mathbf{X}=\mathbf{x})}{\mathbb{P}(\hat{\sigma}(\mathbf{X})=1)}\;\hat{\sigma}(\mathbf{x})\;d\mathbf{x}\;,
\end{align}
by using the expected value definition, marginalization rule and Bayes' theorem. We estimate the integral over $\mathcal{X}$ and $\mathcal{T}$ in Eq.~\eqref{eq:diff_group_abs_dev} using the standard Monte Carlo and trapezoidal methods, respectively. Our (core) objective is defined as 
\[
    \hat{\phi}(\hat{\sigma},\hat{\sigma}_D)\coloneq\frac{1}{|\hat{\sigma}|}\sum_{i=1}^n\ell^1_\mathcal{T}(\hat{S}(t\mid\mathbf{x}^{(i)}),\hat{S}_D(t))\;\hat{\sigma}(\mathbf{x}^{(i)};\boldsymbol{\theta})\;,
\]
where the subgroup size $|\hat{\sigma}|=\sum^n_{i=1}\hat{\sigma}(\mathbf{x}^{(i)};\boldsymbol{\theta})$, and $\boldsymbol{\theta}$ stands for $\boldsymbol{\alpha}$, $\boldsymbol{\beta}$, and $\mathbf{w}$, collectively.

\input{texfigs/box.tex}

To avoid learning overly specific subgroups, and as is standard in subgroup discovery, we introduce a size penalty $|\hat{\sigma}|^\gamma$ to our objective. This allows us to control the trade-off between exceptionality and subgroup size via a hyperparameter $\gamma\in[0,1]$. To learn non-redundant subgroups, we sequentially learn consecutive rules that describe subgroups whose survival is simultaneously exceptional w.r.t the population and the set of $q$ preceding subgroups. For that, we add a regularizing term to our objective that ensures that we get a diverse set of subgroups. Thus, \[
    \arg\max_{\boldsymbol{\theta}}\left[|\hat{\sigma}|^\gamma\;\hat{\phi}(\hat{\sigma},\hat{\sigma}_D)+\sum^q_{g=1}|\hat{\sigma}|^\gamma\;\hat{\phi}(\hat{\sigma},\hat{\sigma}_g)\right]\!,
\]
gives our diversity-ensuring, size-aware (full) objective.

\textbf{Optimization.} Gradient-based optimization allows us to efficiently learn the parameters of our rules, and therewith scale \ourmethod to large datasets. For every subject, each feature is subjected to its corresponding soft condition according to the learned bounds, which are then weighted and combined into a soft rule that predicts a membership probability. We iteratively learn the weights and bounds using standard first-order gradient-based optimization techniques \citep{kingma:2015:adam} for a number of epochs, while annealing the temperature to arrive at crisp discretizations. For the pseudocode and the annealing schedule see Appx.~\ref{apx:algorithm}.

\textbf{Rule Sparsity.} In correlated data, there often exist different ways (i.e.~different conditions over different features) to select the same subgroup. Some of these features may be easier to measure, others may be causally more relevant. As the goal of \ourmethod is exploratory, the default is to not sparsify the rule but rather let the user decide which features (terms) are most informative. In specific cases, such as when dealing with high-dimensional data, the user may instead wish a small, non-redundant set of terms. To this end, we propose a post-hoc strategy, in which we iteratively remove that condition from the rule that minimally changes which subjects are selected, and stop when a minimal Jaccard similarity is reached. We present the pseudocode in Appx.~\ref{apx:pruning}.

\subsection{Significance Testing}

In critical applications, e.g.~healthcare, it is vital to statistically validate the results. While formal guarantees for continuous optimization are challenging, we provide post-hoc statistical guarantees via permutation-tests. \citet{duivesteijn:2011:exploiting} propose to create independent copies of the input dataset $D$ in which the dependency between the covariates, $\mathbf{x}$, and the outcomes, ($t, \delta$), is broken by random permutation. Running \ourmethod on each of these copies enables building a distribution of exceptionalities from false discoveries. Then, by applying the central limit theorem, we can determine the p-value of the exceptionality of the subgroup(s) we find in $D$ from their Z-scores. We use an alpha cutoff of 0.05. When discovering multiple subgroups, we use Bonferroni correction \citep{dunn:1961:multiple} to correct for multiple hypothesis testing. For our instantiation of the above, see Appx.~\ref{apx:validation}.

%% file: texfigs/illustration.tex
\begin{wrapfigure}[17]{r}{0.5\textwidth}%
    \vspace{-10pt}
    \centering
    \scriptsize
    \begin{tikzpicture}
    \begin{groupplot}[
        group style={
            group size=4 by 1,
            horizontal sep=50pt,
        },
        ytick align=center,
        width=\textwidth/4.1,
        xlabel style={yshift=-5pt},
        ymin=0,
        ymax=1,
        xmin=0,
    ]
        
        \nextgroupplot[
        pretty line,
        ylabel={Survival prob.},
        ylabel style={yshift=5pt},
        xlabel={Survival time},
        cycle list name=illusrtration-colors,
        legend image post style={xscale=0.5, ultra thick},
        legend style={
            at={(0.15,1.2)},
            anchor=south,
            legend columns=3,
        },
        legend entries={$\hat{S}(t|\mathbf{x}_i)$,,,, $\hat{S}(t)$, Pop.}
        ]
            \addplot+[very thick, draw=pr-color1a, opacity=0.34] table[x=x, y=y] {data/exps/indiv_survival_5.tsv} node [above,inner sep=2pt,pos=0.5,rotate=-0, color=pr-color1a] {1};

            \addplot+[very thick, draw=pr-color1a, opacity=0.34] table[x=x, y=y] {data/exps/indiv_survival_1.tsv} node [below,inner sep=6pt,pos=0.13,rotate=60, color=pr-color1a] {2};

            \addplot+[very thick, draw=pr-color1a, opacity=0.34] table[x=x, y=y] {data/exps/indiv_survival_7.tsv} node [above,inner sep=8pt,pos=0.3,rotate=45, color=pr-color1a] {3};

            \addplot+[very thick, draw=pr-color1a, opacity=0.34] table[x=x, y=y] {data/exps/indiv_survival_6.tsv} node [below,inner sep=2pt,pos=0.13,rotate=-0, color=pr-color1a] {4};

            \addplot+[very thick, draw=pr-color1a] table[x=x, y=y] {data/exps/above_survival.tsv};

            \addplot+[very thick, draw=pr-color-gray] table[x=x, y=y] {data/exps/pop_survival_indivs.tsv};
        
        \nextgroupplot[
        pretty line,
        cycle list name=illusrtration-colors,
        legend image post style={xscale=0.5, ultra thick},
        xlabel={Survival time},
        ylabel={Survival prob.},
        ylabel style={yshift=5pt},
        legend style={
            at={(-0.95,1.2)},
            anchor=south west,
            legend columns=4,
        },
        legend entries={$\hat{S}_1(t)$, $\hat{S}_2(t)$, $\hat{S}_3(t)$, Pop.}
        ]
            \addplot+[very thick] table[x=x, y=y] {data/exps/above_survival.tsv};

            \addplot+[very thick] table[x=x, y=y] {data/exps/below_survival.tsv};

            \addplot+[very thick] table[x=x, y=y] {data/exps/crossing_survival.tsv};

            \addplot+[very thick] table[x=x, y=y] {data/exps/pop_survival.tsv};

        \end{groupplot}
    \end{tikzpicture}
    \caption{\textbf{(Left)} Individual survival functions $\hat{S}(t|\mathbf{x}_i)$ are more informative than their group-level estimate $\hat{S}(t)$. \textbf{(Right)} The absolute difference between group and reference survival (Pop.) is effective in the absence of crossing ($\hat{S}_1$ and $\hat{S}_2$) and in its presence ($\hat{S}_3$), unlike the signed difference.}
    \label{fig:illustration}
\end{wrapfigure}%

%% file: texfigs/box.tex
\begin{wrapfigure}[15]{r}{0.5\textwidth}
    \vspace{-20pt}
    \centering
    \scriptsize
    \subfloat[Soft condition]{%
        \centering
        \begin{tikzpicture}
        \begin{groupplot}[
            group style={
                group size=4 by 1,
                horizontal sep=55pt,
            },
            ytick align=center,
            width=\textwidth/4.5,
            xlabel style={yshift=-5pt},
            ymin=0,
            ymax=1,  
            xmin=0,
        ]
            
            \nextgroupplot[
            pretty line,
            ylabel = {$\hat{\pi}(x;\tau)$},
            xlabel={$x$},
            ylabel style={yshift=5pt},
            xtick={0,0.3,0.7,1},
            cycle list name=box1-colors,
            ytick={0,0.3,0.7,1},
            legend image post style={xscale=0.5, ultra thick},
            legend style={
                at={(0.35,1.2)},
                anchor=south,
                legend columns=3,
            },
            legend entries={$0.1$,$0.05$,$0.025$}
            ]
                \foreach \i in {1, 2, 3} {
                    \addplot+[very thick] table[x=X, y=y\i, col sep=comma] {data/exps/box_lines.csv};
                }
            
            \end{groupplot}
        \end{tikzpicture}
        \label{fig:box1}
    }
    \hfill
    \subfloat[Soft rule]{%
        \centering
        \begin{tikzpicture}
        \node at (0.5,0.55) {\includegraphics[width=2cm]{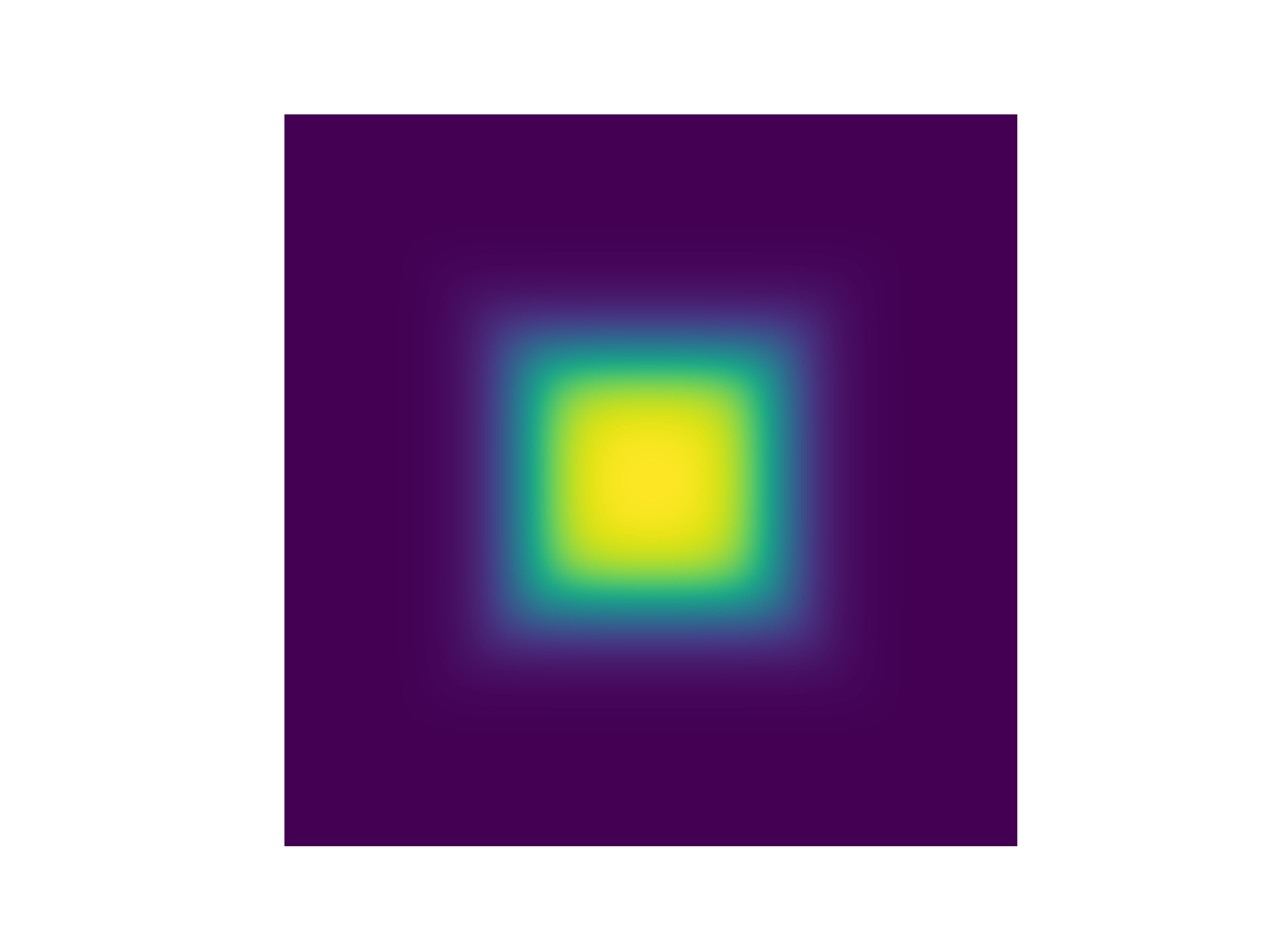}};
        \begin{groupplot}[
            group style={
                group size=4 by 1,
                horizontal sep=55pt,
            },
            ytick align=center,
            width=\textwidth/4.5,
            xlabel style={yshift=-5pt},
            ymin=0,
            ymax=1,  
            xmin=0,
        ]
            
            \nextgroupplot[
            pretty line,
            axis equal image,
            xtick = {0,0.5,1},
            ytick = {0,0.5,1},
            ylabel style={yshift=5pt},
            xlabel = {$x_1$},
            ylabel = {$x_2$},
            xmin=0,
            xmax=1,
            ymin=0,
            ymax=1,
            colorbar horizontal,
            colormap name=viridis,
            colorbar style={
                width=2.7cm,
                height=0.1cm,
                xtick={0,1},
                at={(0.45,1.4)},
                anchor=south,
            },
            point meta min=0,
            point meta max=1
            ]

            \end{groupplot}
        \end{tikzpicture}
        \label{fig:box2}
    }
    \caption{\emph{Soft rules.} The soft condition approaches a hard interval with decreasing temperature $\tau\to 0$ \textbf{(a)}. Multiple soft conditions combine to form a soft rule depicted as a hyper box in the covariate space \textbf{(b)}. Adapted from \citet{xu:2024:syflow}.}
    \label{fig:box}
\end{wrapfigure}

%% file: sections/experiments.tex
\section{Experiments}
 
Next, we empirically evaluate \ourmethod on synthetic and real-world data. We compare it to state-of-the-art methods \RK~\citep{gudys:2020:rulekit}, \FIBERS~\citep{urbanowicz:2023:fibers}, and \EDS~\citep{vimieiro:2025:esmamds}. All baselines optimize the logrank for exceptionality, but do so employing different search strategies. We use the implementations of the respective authors, and allow each method, including \ourmethod, up to 48 hours per experiment. We tune hyperparameters for all methods using grid search (see Appx.~\ref{apx:hyperparams}). We give the pseudocode for our data generator and additional results in Appx.~\ref{apx:data} and \ref{apx:additional}.

\subsection{Synthetic Data}

We start by evaluating on data with known ground truth. To this end, we generate data as follows. We sample $p=10$ feature variables $\mathbf{X}_j$ from a uniform distribution $\mathcal{U}(0,1)$, and the survival time $T$ from a Weibull distribution for its configurability, $\weibull(1.5,5)$. Subject to $10\%$ uniform censoring, we create a dataset of $n=10\,000$ subjects. We randomly choose $k \leq p$ features (here, $k=2$) and sample corresponding bounds, creating a rule that covers 20\% of the feature space. We randomly split the subjects into those to be included in the subgroup (20\%) and those not (80\%). We ensure the feature values for the samples designated not to belong to the subgroup to be outside the rule bounds across conditional features. We then do the same to the subjects designated to belong to the subgroup so they fulfill the rule but also resample their outcomes from a Weibull distribution having the same shape of $1.5$ as the overall population, but a different scale of $1$. In other words, subjects experience their event $5\times$ earlier in the planted subgroup than outside, on expectation. With our benchmark, we can vary the data or the subgroup generation parameters.

\input{texfigs/synth_panel_bands.tex}

\textbf{Scalability.} First, we assess the performance of \ourmethod and its competitors on data where we vary the number of features. We report the average $F_1$-scores over 10 runs in Fig.~\ref{fig:synth_panel_bands} (left). We see that all methods are stable in terms of F1-scores as the dimensionality increases, and that \RK does not finish in time for more than 500 features. \ourmethod outperforms the closest competitor, \EDS, by a wide margin. We provide the runtimes in Fig.~\ref{fig:additional_synth_panel} (left) in Appx.~\ref{apx:additional}.

\textbf{Censoring.} Next, we assess robustness to censoring when increasing the percentage of censored subjects. We give the results in Fig.~\ref{fig:synth_panel_bands} (center); we see that \ourmethod performs very well, with $F_1$ of 0.8 and above for up to 60\% censoring. We check whether having the subgroup outsurvive the population, and vice versa, affects performance by varying the subgroup and fixing the population hazards, and give the results in Fig.~\ref{fig:additional_synth_panel} (center).

\textbf{Subgroup Size.} Lastly, we assess retrieval quality as the percentage of subjects that belong to the planted subgroup increases. We see in Fig.~\ref{fig:synth_panel_bands} (right) that all methods improve as the planted subgroup size increases, with \ourmethod performing the best. We attribute the improvement in the baselines to them generally preferring larger subgroups, according to our observations (Appx.~\ref{apx:additional}). In Fig.~\ref{fig:additional_synth_panel} (right), we also provide results on sample sufficiency by varying dataset sizes.

Overall, \ourmethod is  stable with average $F_1 \approx0.8$, up to very high rates of censoring ($>80\%$). \EDS is the closest competitor, hovering around 0.55, closely followed by \RK consistently at 0.45, while \FIBERS follows in all experiments, with average $F_1 \approx0.35$ and large standard errors.

\begin{table*}[b!]
    \centering
    \scriptsize
    \caption{\emph{Real-world setting.} Exceptionality of subgroups discovered by \ourmethod (\emph{Ours}), \RK (\textsc{RK}), \EDS (\textsc{ED}), and \FIBERS (\textsc{Fib}), measured using 3 metrics (our objective, logrank, and mean-shift). \ourmethod ranks first across metrics. Greater is better. Highest values marked in bold.}
    \label{tab:real}
    \begin{tabular}{lrrrrrrrrrrrrr}
        \toprule
        & \multicolumn{4}{c}{Our objective} & \multicolumn{4}{c}{Logrank} & \multicolumn{4}{c}{Mean-shift} \\
        
        \cmidrule(lr){2-5} \cmidrule(lr){6-9} \cmidrule(lr){10-13}
        
        & \emph{Ours} & \textsc{RK} & \textsc{ED} & \textsc{Fib} & \emph{Ours} & \textsc{RK} & \textsc{ED} & \textsc{Fib} & \emph{Ours} & \textsc{RK} & \textsc{ED} & \textsc{Fib} \\
        \midrule
        UnempDur & \textbf{6.4} & 3.8 & 3.8 & 3.8 & 19.3 & \textbf{226.5} & 125.0 & 125.0 & 2.0 & \textbf{2.1} & \textbf{2.1} & \textbf{2.1} \\
        Nwtco & \textbf{1011.6} & 681.0 & 681.0 & 208.6 & \textbf{374.2} & 281.3 & 31.4 & 0.0 & \textbf{1162.0} & 688.2 & 688.2 & 0.6 \\
        Rott2 & \textbf{559.2} & 326.2 & 487.6 & 487.6 & \textbf{267.0} & 93.0 & 71.7 & 169.3 & \textbf{31.5} & 10.0 & 25.2 & 25.2 \\
        Rdata & \textbf{259.2} & 213.8 & 213.8 & 158.5 & \textbf{113.4} & 107.8 & 14.6 & 43.3 & \textbf{1116.6} & 933.5 & 933.5 & 705.5 \\
        Aids2 & \textbf{66.6} & 60.5 & 54.0 & 34.7 & 0.2 & 1.7 & \textbf{4.3} & 0.0 & 22.7 & 61.2 & \textbf{61.4} & 0.7 \\
        Dialysis & 4.5 & \textbf{4.7} & \textbf{4.7} & 4.6 & 6.4 & 68.4 & 70.9 & \textbf{305.9} & 0.6 & \textbf{5.7} & 3.1 & 4.5 \\
        TRACE & 261.8 & 272.5 & \textbf{332.1} & 261.6 & 0.0 & \textbf{94.6} & 93.2 & 0.0 & 0.0 & 0.6 & \textbf{1.7} & 0.0 \\
        Support2 & \textbf{132.8} & 81.4 & 104.9 & 54.8 & \textbf{251.0} & 59.9 & 46.8 & 11.9 & \textbf{405.7} & 227.6 & 405.5 & 71.0 \\
        DataDIVAT2 & \textbf{286.3} & 141.3 & 172.5 & 141.3 & \textbf{33.6} & 31.6 & 10.9 & 31.6 & \textbf{3.2} & 1.6 & 2.1 & 1.6 \\
        ProstateSurv. & \textbf{20.1} & 7.5 & 14.1 & 5.4 & \textbf{425.7} & 64.7 & 82.5 & 82.5 & \textbf{6.1} & 3.7 & 3.4 & 1.0 \\
        Actg & \textbf{34.6} & 24.2 & 18.8 & 11.3 & \textbf{60.0} & 1.1 & 12.3 & 0.0 & \textbf{54.1} & 27.3 & 6.0 & 0.4 \\
        Scania & \textbf{172.5} & 147.4 & 147.4 & 84.7 & 4.7 & \textbf{14.7} & 13.7 & 9.9 & \textbf{6.5} & 2.4 & 2.4 & 0.1 \\
        Grace & \textbf{38.6} & 16.5 & 21.0 & 13.1 & \textbf{59.3} & 33.3 & 28.7 & 13.8 & \textbf{64.0} & 16.0 & 21.8 & 11.1 \\
        \midrule
        Avg. rank & \textbf{1.38} & 2.65 & 2.27 & 3.69 & \textbf{2.04} & 2.19 & 2.69 & 3.08 & \textbf{1.81} & 2.46 & 2.23 & 3.5 \\
        \bottomrule
    \end{tabular}
\end{table*}

\subsection{Real-World Data}

Next, we evaluate how well \ourmethod performs on real-world data. To this end, we consider 13 time-to-event datasets from the SurvSet repository \citep{drysdale:2022:survset} that span different domains and have at least 1000 subjects and 7 covariates. Because, for these datasets, the ground truth is unknown, we focus on the exceptionality of the found subgroups using 3 metrics: logrank, mean-shift, and our objective, defined in Appx.~\ref{apx:exceptionality}. As before, we compare \ourmethod to \RK, \EDS, and \FIBERS, where we consider the best of the top-5 subgroups found by each baseline, and the first subgroup found by \ourmethod.  We report the results in Table~\ref{tab:real}. For mean-shift, we are only interested in the magnitude of the shift, and hence report absolute values. We provide the average subgroup sizes and rule lengths found by each method along with the runtimes in Tables~\ref{tab:subgroup_sizes}, \ref{tab:rule_lengths}, and \ref{tab:runtimes}, respectively.

We see that, despite them having an advantage, \ourmethod outperforms its competitors by a wide margin. For most datasets it achieves (much) higher scores, and overall the best average ranks, for each metric. For the \emph{Nwtco}, \emph{Rott2}, \emph{ProstateSurvival}, \emph{Actg}, and \emph{Grace} datasets, it even scores between 14.7\% to 1091.7\% better on \emph{each} metric.

For some datasets, e.g.~\emph{Unemployment Duration}, we find that methods excel at their own metrics. To investigate this irrespective of these metrics, we consider the Kaplan-Meier survival curves \citep{kaplan:1958:nonparametric} of the survival subgroups found by \ourmethod and \RK on \emph{Unemployment Duration} in Fig.~\ref{fig:real1}. We see that both found subgroups with substantially longer time until re-employment than the overall population. The rule \RK found, \emph{“unemployment insured”}, is very succinct. The rule found by \ourmethod, \emph{“$\text{age} > 47.43 \land \text{replacement rate} \in [0.11,1.95] \land \text{disregard rate} < 0.96 \land \text{tenure} > 6.14 \land \text{log(wage)} \in [4.20,7.36]$”}, is also informative and more detailed as it describes that subjects who are relatively old and earned a high wage tend to take longer until re-employment.

\input{texfigs/real.tex}

Here, we explore exceptionality using grid search over the same lattice of $\gamma$ values for all datasets. On the heart attack dataset, \emph{TRACE}, \ourmethod finds a very large subgroup. By adjusting the hyperparameter $\gamma$, we can explore smaller subgroups. When we set its value to 0.025, we get a subgroup that at 470 subjects covers more than half the subgroup of 896 subjects found by \EDS, scoring 303.5 on our objective, 97.6 on logrank (new best), and 1.4 on mean-shift. When we inspect the Kaplan-Meier curves of the subgroups in Fig.~\ref{fig:real2}, we see that \ourmethod in fact outperforms \EDS by a margin. Inspecting the found rules themselves, 
\emph{“$\lnot \text{clinical heart failure}$”} for \EDS, and
\emph{“$\text{wall motion index} > 1.37 \land \text{female} \land \lnot \text{clinical heart failure}$”} for \ourmethod, we see that \ourmethod finds a more detailed rule.

Overall, these experiments show that \ourmethod performs well on a wide range of settings, and remarkably well compared on the same metrics that the competitors optimize for. It finds sensible and precisely described subgroups that are generally more exceptional than those found by the closest competitors, while allowing user-adjustable subgroup sizes.

\subsection{Case Study: Neck Cancer}

Last, in collaboration with domain experts, we qualitatively evaluate \ourmethod via a case study. We retrospectively consider locally advanced head and neck squamous cell carcinoma (HNSCC) patients undergoing radiochemotherapy~\citep{schmidt:2020:comparison}. The treatment data consists of two cohorts. The first, or primary, cohort are patients who only received radiochemotherapy ($n=136$). The second, or postoperative, cohort are patients who received radiochemotherapy after a tumor removal operation ($n=190$). The covariates are tumor gene expressions ($p=158$) and the outcome is the time until tumor recurrence. Censoring is 60\% for the first and 85\% for the second cohort. From a clinical perspective, we are both interested in subgroups of subjects that respond better, and those that respond worse to treatment, as this allows more targeted (personalized) treatment.

We run \ourmethod on both datasets and discover four subgroups each. We show the survival curves in Fig.~\ref{fig:study} and give the rules in Tables~\ref{tab:prune} and \ref{tab:prune2} in Appx.~\ref{apx:additional}. For both cohorts, \ourmethod finds two subgroups that respond better, and two that respond worse, to the respective treatments than the overall populations. Undergoing the tumor removal operation expectedly improves the overall survival, nevertheless, subgroup discovery is perfectly suited, when RCTs may fall short, for pointing out when this is not the case, and when it is even detrimental.

First, consider the primary cohort. We see in Fig.~\ref{fig:study1} that approximately half of the population are still alive after 80 months. Subgroups $S_1$ (yellow, n=6) and $S_3$ (orange, n=28) identify subjects that respond better, while subgroups $S_2$ (red, n=12) and $S_4$ (blue, n=25) respond worse. Although the effect sizes are not significant under our test, it is highly encouraging that the rules select on meaningful biomarkers. \citet{schmidt:2020:comparison} recently reported on the survival profile when stratifying primary radiochemotherapy patients based on hypoxia 15- and 26-gene expressions. Subgroup $S_4$ not only shows a similar profile, but like subgroup $S_3$ also mainly selects on hypoxia-associated genes.

\input{texfigs/study.tex}

Next, we investigate the results on the postoperative cohort. Here, almost 80\% of patients survive longer than 80 months (Fig.~\ref{fig:study2}). We are specifically interested in subgroups $S_5$ (green, n=12) and $S_7$ (purple, n=12) as these identify subjects responding very poorly. Both are significant under our statistical model. As before, the corresponding rules make biological sense: they both select on genes relating to cellular communication, metabolism, and DNA repair, which are related to highly aggressive tumor subtypes~\citep{lendahl:2009:generating,toustrup:2011:development} for which conventional treatment not only fails to improve patient condition but even promotes recurrence \citep{barker:2015:tumour}. The learned rules allow clinicians to pre-screen these subjects and offer them alternative treatment. 

%% file: texfigs/synth_panel_bands.tex
\begin{figure*}
    \centering
    \scriptsize
    \begin{tikzpicture}
    \begin{groupplot}[
        group style={
                group size=4 by 1,
                horizontal sep=55pt,
            },
            width=\textwidth/3.85,
            ylabel style={yshift=5pt},
            xlabel style={yshift=-5pt},
            ylabel={$F_1$},
            pretty grid line,
            ytick={0.2,0.4,0.6,0.8},
            extra y ticks={0,0,1,0},
            extra y tick style={grid=none},
            ymin=0,
            ymax=1,
            cycle list name=pr-colors-conf,
        ]

        \nextgroupplot[
            pretty line,
            pretty fill legend,
            xlabel={Number of features $p$},
            xlabel style={xshift=-30pt},
            ylabel style={yshift=30pt},
            xmode=log,
            log basis x=10,
            scaled x ticks=base 10:1,
            cycle list name=pr-colors-conf,
        ]
            \foreach \i in {sysurv, rulekit, esmamds, fibers} {
                \addplot+[very thick] table[x=x, y=y] {data/exps/1kdims_f1_\i.tsv};
                \addplot+[forget plot, draw=none, name path=low] table[x=x, y=y_c0] {data/exps/1kdims_f1_\i.tsv};
                \addplot+[forget plot, draw=none, name path=up] table[x=x, y=y_c1] {data/exps/1kdims_f1_\i.tsv};
                \addplot fill between[of=low and up];
            }

        \nextgroupplot[
            pretty line,
            pretty fill legend,
            xlabel={Percentage of censored subjects},
            cycle list name=pr-colors-conf,
            xtick={0,0.3,0.6,0.9},
        ]
            \foreach \i in {sysurv, rulekit, esmamds, fibers} {
                \addplot+[very thick] table[x=x, y=y] {data/exps/censorship_f1_\i.tsv};
                \addplot+[forget plot, draw=none, name path=low] table[x=x, y=y_c0] {data/exps/censorship_f1_\i.tsv};
                \addplot+[forget plot, draw=none, name path=up] table[x=x, y=y_c1] {data/exps/censorship_f1_\i.tsv};
                \addplot fill between[of=low and up];
            }

        \nextgroupplot[
            pretty line,
            pretty fill legend,
            xlabel={Percentage of subjects in subgroup},
            cycle list name=pr-colors-conf,
            legend style={
                at={(1.4,0.4)},
                anchor=west,
                legend columns=1,
            },
            legend entries={\ourmethod,,\RK,,\EDS,,\FIBERS},
        ]
            \foreach \i in {sysurv, rulekit, esmamds, fibers} {
                \addplot+[very thick] table[x=x, y=y] {data/exps/target_size_f1_\i.tsv};
                \addplot+[forget plot, draw=none, name path=low] table[x=x, y=y_c0] {data/exps/target_size_f1_\i.tsv};
                \addplot+[forget plot, draw=none, name path=up] table[x=x, y=y_c1] {data/exps/target_size_f1_\i.tsv};
                \addplot fill between[of=low and up];
            }
        \end{groupplot}
    \end{tikzpicture}
    \caption{\emph{Synthetic setting.} Comparison of \ourmethod and each of \RK, \EDS, and \FIBERS in terms of $F_1$-scores recovering planted subgroups with increasingly large dataset dimensionalities \textbf{(Left)}, increasingly many censored subjects \textbf{(Center)}, and increasingly large planted subgroups \textbf{(Right)}. \EDS is the closest competitor to \ourmethod closely followed by \RK. Higher is better. The shaded areas show ±1 standard error over 10 runs.}
    \label{fig:synth_panel_bands}
\end{figure*}

%% file: texfigs/real.tex
\begin{wrapfigure}[18]{r}{0.5\textwidth}
    \vspace{-5pt}
    \centering
    \scriptsize
    \subfloat[Unemployment dataset]{%
        \centering
        \begin{tikzpicture}
        \begin{groupplot}[
            group style={
                group size=4 by 1,
                horizontal sep=25pt,
            },
            ytick align=center,
            ylabel style={yshift=5pt},
            xlabel style={yshift=-5pt},
            width=\textwidth/3.85,
            ymin=0,
            ymax=1,
            xmin=0,
        ]
            \nextgroupplot[
            pretty line,
            pretty fill legend,
            xtick={0,7,14,21,28},
            legend style={
                at={(-0.2,1.2)},
                anchor=south west,
                legend columns=4,
                overlay
            },
            cycle list name=ss-rk-pop-colors,
            xlabel={Unemployment time (w)},
            ylabel={Survival prob.},
            legend entries={\ourmethod,, \RK,, \EDS,, Population},
            ]
                \foreach \i in {SySurv, RuleKit} {
                    \addplot+[very thick] table[x=x, y=y] {data/exps/UnempDur_\i_KM.tsv};
                    \addplot+[forget plot, draw=none, name path=low] table[x=x, y=y_c0] {data/exps/UnempDur_\i_KM.tsv};
                    \addplot+[forget plot, draw=none, name path=up] table[x=x, y=y_c1] {data/exps/UnempDur_\i_KM.tsv};
                    \addplot fill between[of=low and up];
                }
            \addplot+[very thick, draw=none] table[x=x, y=y] {data/exps/UnempDur_Population_KM.tsv};
            \addplot+[forget plot, draw=none, name path=low] table[x=x, y=y_c0] {data/exps/UnempDur_Population_KM.tsv};
            \addplot+[forget plot, draw=none, name path=up] table[x=x, y=y_c1] {data/exps/UnempDur_Population_KM.tsv};
            \addplot[opacity=0.0] fill between[of=low and up];

            \addplot+[very thick] table[x=x, y=y] {data/exps/UnempDur_Population_KM.tsv};
            \addplot+[forget plot, draw=none, name path=low] table[x=x, y=y_c0] {data/exps/UnempDur_Population_KM.tsv};
            \addplot+[forget plot, draw=none, name path=up] table[x=x, y=y_c1] {data/exps/UnempDur_Population_KM.tsv};
            \addplot fill between[of=low and up];
            
            \end{groupplot}
        \end{tikzpicture}
        \label{fig:real1}
    }
    \subfloat[Heart attack dataset]{%
        \centering
        \begin{tikzpicture}
        \begin{groupplot}[
            group style={
                group size=4 by 1,
                horizontal sep=25pt,
            },
            ytick align=center,
            ylabel style={yshift=5pt},
            xlabel style={yshift=-5pt},
            width=\textwidth/3.85,
            ymin=0,
            ymax=1,  
            xmin=0,
        ]   

            \nextgroupplot[
            pretty line,
            pretty fill legend,
            ylabel={Survival prob.},
            cycle list name=ss-eds-pop-colors,
            xlabel={Survival time (y)},
            ]
                \foreach \i in {SySurv, EsmamDS, Population} {
                    \addplot+[very thick] table[x=x, y=y] {data/exps/TRACE_\i_KM.tsv};
                    \addplot+[forget plot, draw=none, name path=low] table[x=x, y=y_c0] {data/exps/TRACE_\i_KM.tsv};
                    \addplot+[forget plot, draw=none, name path=up] table[x=x, y=y_c1] {data/exps/TRACE_\i_KM.tsv};
                    \addplot fill between[of=low and up];
                }
            \end{groupplot}
        \end{tikzpicture}
        \label{fig:real2}
    }
    \caption{\emph{Real-world setting.} Survival subgroups discovered in the unemployment (\textbf{a}) and heart attack (\textbf{b}) datasets using \ourmethod and \RK resp. \EDS. \ourmethod learns more exceptional subsets of the subgroups discovered by baselines. The shaded areas show 95\% confidence intervals.}
    \label{fig:real}
\end{wrapfigure}

%% file: texfigs/study.tex
\begin{wrapfigure}[18]{r}{0.5\textwidth}
    \vspace{-10pt}
    \centering
    \scriptsize
    \subfloat[Primary cohort]{%
        \centering
        \begin{tikzpicture}
        \begin{groupplot}[
            group style={
                group size=1 by 1,
                horizontal sep=25pt,
            },
            ytick align=center,
            width=\textwidth/3.85,
            ylabel style={yshift=5pt},
            xlabel style={yshift=-5pt},
            ymin=0,
            ymax=1,  
            xmin=0,
        ]
            \nextgroupplot[
            pretty line,
            pretty fill legend,
            ylabel={Survival prob.},
            cycle list name=case-colors,
            xlabel={Follow up time (mo)},
            legend style={
                at={(1,1.2)},
                anchor=south east,
                legend columns=3,
            },
            legend entries={$S_1(t)$,,$S_2(t)$,,Pop.,,$S_3(t)$,,$S_4(t)$}
            ]
                \foreach \i in {0, 1, Population, 3, 4} {
                    \addplot+[very thick] table[x=x, y=y] {data/exps/mmc6_nCounter_pRCTx_\i_KM.tsv};
                    \addplot+[forget plot,draw=none, name path=low] table[x=x, y=y_c0] {data/exps/mmc6_nCounter_pRCTx_\i_KM.tsv};
                    \addplot+[forget plot,draw=none, name path=up] table[x=x, y=y_c1] {data/exps/mmc6_nCounter_pRCTx_\i_KM.tsv};
                    \addplot fill between[of=low and up];
                }
                
            \end{groupplot}
        \end{tikzpicture}
        \label{fig:study1}
    }
    \subfloat[Postoperative cohort]{%
        \centering
        \begin{tikzpicture}
        \begin{groupplot}[
            group style={
                group size=1 by 1,
                horizontal sep=25pt,
            },
            ytick align=center,
            width=\textwidth/3.85,
            ylabel={Survival prob.},
            ylabel style={yshift=5pt},
            xlabel style={yshift=-5pt},
            xtick={0,25,50,75,100},
            ymin=0,
            ymax=1,  
            xmin=0,
        ]

            \nextgroupplot[
            pretty line,
            pretty fill legend,
            cycle list name=case-colors2,
            xlabel={Follow up time (mo)},
            legend style={
                at={(1,1.2)},
                anchor=south east,
                legend columns=3,
            },
            legend entries={$S_5(t)$,,$S_6(t)$,,Pop.,,$S_7(t)$,,$S_8(t)$}
            ]
                \foreach \i in {0, 1, Population, 3, 4} {
                    \addplot+[very thick] table[x=x, y=y] {data/exps/mmc6_nCounter_PostOp_\i_KM.tsv};
                    \addplot+[forget plot,draw=none, name path=low] table[x=x, y=y_c0] {data/exps/mmc6_nCounter_PostOp_\i_KM.tsv};
                    \addplot+[forget plot,draw=none, name path=up] table[x=x, y=y_c1] {data/exps/mmc6_nCounter_PostOp_\i_KM.tsv};
                    \addplot fill between[of=low and up];
                }
            \end{groupplot}
        \end{tikzpicture}
        \label{fig:study2}
    }
    \caption{\emph{Case study.} \ourmethod discovers, overall, sets of diverse subgroups in the primary cohort (\textbf{a}), and two subgroups of poor responders in the postoperative cohort (\textbf{b}) of HNSCC data. The shaded areas show 95\% confidence intervals.}
    \label{fig:study}
\end{wrapfigure}

%% file: sections/conclusion.tex
\section{Conclusion}

We propose \ourmethod, a method for survival subgroup discovery. \ourmethod leverages individual survival functions obtained through non-parametric survival regression, namely, Random Survival Forests. In particular, we overcome the common drawbacks of existing approaches that rely on parametric assumptions on the underlying survival structure while also obscuring intra-subgroup variability by operating at the level of aggregate statistics. \ourmethod exploits individual-level deviations in survival w.r.t the population in the way it quantifies subgroup exceptionality, achieving a more sensitive measure compared to prior work. \ourmethod employs a rule learner whose parameters can be learned to automatically select features and cutoffs along their domains to select members of a subgroup. By doing so, \ourmethod effectively results in human-interpretable rules that describe subgroups with exceptional survival behavior as it maximizes subgroup exceptionality via gradient-based optimization. Extensive experiments on real-world datasets demonstrate that \ourmethod consistently outperforms existing baselines in terms of subgroup exceptionality.
In a case study on neck cancer patients, \ourmethod finds biomarkers that are known to be associated with good, respectively, poor response to treatment while also identifying novel subgroups that may warrant further investigation.

\textbf{Limitations.} \ourmethod provides a novel method to uncover, possibly novel, subgroups with exceptional survival characteristics. \ourmethod is strictly associational, and while its results may indicate (unknown) causal mechanisms, it does not provide any guarantees in this regard. Before putting any of its results into (clinical) practice, randomized controlled trials, or other ways to verify causality should be employed. Moreover, \ourmethod relies on the population model to obtain individual survival functions, which dictates the quality of the results. In particular, our objective may become overestimated, but there exist asymptotic  methods for debiasing it \citep{ito:2018:unbiased}.

\textbf{Impact.} Nevertheless, given these potentially interesting cohorts, researchers may reproduce previous findings or make new discoveries that allow for more informed decisions about treatment and intervention strategies. We underscore the importance of responsible use of our method in critical settings: the rules \ourmethod learns are based on correlations extracted from potentially selection-biased observational data. Thus, any decisions based on these rules must be made with caution and in conjunction with domain experts as misuse may lead to unfair treatment of specific demographics.

\textbf{Future Work.} In \ourmethod, we consider tabular data; however, many applications involve other data modalities such as images or sequences. We intend to extend \ourmethod to incorporate such structured data alongside tabular covariates in a meaningful way to discover survival subgroups characterized by both types of data. This may involve anchoring the discovered subgroups in interpretable tabular features as well as emergent visual features, or sequential patterns. Furthermore, although we do not constrain \ourmethod to finding compact, low variance, subgroups so as to keep it as sensitive as possible, intra-subgroup cohesion can be a favorable inductive bias to enforce for some applications \citep{boley:2017:identifying}. This can be achieved by adding a tunable regularization term to our objective that encourages low intra-subgroup variance in survival, at a computational cost.

%% file: sections/acknowledgements.tex
Jawad Al Rahwanji is supported by DFG GRK 2853/1 ``Neuroexplicit Models of Language, Vision, and Action''.

%% file: sections/appendix.tex
\section{Proof of Proposition \ref{prop:individual}}
\label{apx:proof}

\Individual*

\begin{proof}
    Let $Z=s(\mathbf{X})-s_B$ be a random variable. The triangle inequality property of the absolute value function (Jensen's inequality applied to the convex absolute value function) states that
    \[
        \mathbb{E}[\;|Z|\;]\geq|\;\mathbb{E}[Z]\;|\;.
    \]
    It also holds to condition the expectations using the indicator $\sigma_A(\mathbf{x})=1$ to get
    \[
        \mathbb{E}_{\mathbf{x}\sim P_{\mathbf{X}}}[\;|Z|\;\mid\sigma_A(\mathbf{x})=1\;]\geq|\;\mathbb{E}_{{\mathbf{x}\sim P_{\mathbf{X}}}}[Z\mid\sigma_A(\mathbf{x})=1]\;|\;.
    \]
    We substitute the definition of $Z$ into the inequality to get
    \[
        \mathbb{E}_{\mathbf{x}\sim P_{\mathbf{X}}}[\;|s(\mathbf{x})-s_B|\;\mid\sigma_A(\mathbf{x})=1]\geq|\;\mathbb{E}_{{\mathbf{x}\sim P_{\mathbf{X}}}}[s(\mathbf{x})-s_B\mid\sigma_A(\mathbf{x})=1]\;|\;.
    \]
    We use the linearity of the expectation, $\mathbb{E}[Z+W]=\mathbb{E}[Z]+\mathbb{E}[W]$, and note that the expectation of the positive constant $s_B$ is the constant itself to get
    \[
        \mathbb{E}_{\mathbf{x}\sim P_{\mathbf{X}}}[\;|s(\mathbf{x})-s_B|\;\mid\sigma_A(\mathbf{x})=1\;]\geq|\;\mathbb{E}_{\mathbf{x}\sim P_{\mathbf{X}}}[s(\mathbf{x})\mid\sigma_A(\mathbf{x})=1]-s_B\;|\;.
    \]
    Lastly, we substitute the definitions of $\ell_t^1(\cdot,\cdot)$ and $s_A$ to arrive at
    \begin{align}
        \mathbb{E}_{\mathbf{x}\sim P_{\mathbf{X}}}[\;\ell_t^1(s(\mathbf{x}),s_B)\mid\sigma_A(\mathbf{x})=1\;]\geq\ell_t^1(s_A,s_B)\;.
    \end{align}
\end{proof}

\begin{corollary}
    \label{cor:individual}
    Under the assumptions of Prop.~\ref{prop:individual}, if there exist $\mathbf{x}^{(1)},\mathbf{x}^{(2)}\in\mathcal{X}$ selectable by $\sigma_A$ with a strictly positive probability measure such that 
    \[
        s(\mathbf{x}^{(1)}) > s_B\land s(\mathbf{x}^{(2)}) < s_B\;,
    \] 
    then the inequality in Prop.~\ref{prop:individual} is strict, i.e.
    \[
        \mathbb{E}_{\mathbf{x}\sim P_{\mathbf{X}}}[\ell^1_t(s(\mathbf{x}),s_B)\mid\sigma_A(\mathbf{x})=1]>\ell^1_t(s_A,s_B)\;.
    \]
\end{corollary}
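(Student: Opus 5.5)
The plan is to sharpen the Jensen/triangle-inequality argument used for Prop.~\ref{prop:individual} by tracking exactly when equality can hold. Write $Z=s(\mathbf{X})-s_B$ under the conditional law $P_{\mathbf{X}\mid\sigma_A=1}$, so that $Z$ is a bounded random variable, since survival values lie in $[0,1]$. As in the proof of the proposition, linearity gives $\mathbb{E}[Z\mid\sigma_A=1]=s_A-s_B$. The claim therefore reduces to showing $\mathbb{E}[|Z|]>|\mathbb{E}[Z]|$ whenever $Z$ takes both strictly positive and strictly negative values with positive probability.

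The key identity I would use is
\[
\mathbb{E}[|Z|]-|\mathbb{E}[Z]| = 2\min\bigl(\mathbb{E}[Z^+],\,\mathbb{E}[Z^-]\bigr),
\]
where $Z^+=\max(Z,0)$ and $Z^-=\max(-Z,0)$. It follows from two facts: $|Z|=Z^++Z^-$ and $Z=Z^+-Z^-$, together with $a+b-|a-b|=2\min(a,b)$ for $a,b\ge 0$. This avoids appealing to strict convexity, which the absolute value lacks.

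It then remains to show $\mathbb{E}[Z^+]>0$ and $\mathbb{E}[Z^-]>0$. This is where the hypothesis has to be interpreted carefully, and I expect it to be the main obstacle. Read literally, the corollary asks for single points $\mathbf{x}^{(1)},\mathbf{x}^{(2)}$ ``with strictly positive probability measure''. For continuous covariates I would instead read this as requiring that the sets $\{\mathbf{x}:\sigma_A(\mathbf{x})=1,\ s(\mathbf{x})>s_B\}$ and $\{\mathbf{x}:\sigma_A(\mathbf{x})=1,\ s(\mathbf{x})<s_B\}$ each have positive conditional probability. (Alternatively, one could assume atoms, or continuity of $s$ so that a neighbourhood of each point has positive mass.) Under this reading, $Z^+>0$ on a set of positive measure, and a nonnegative function that is positive on a positive-measure set has positive integral, so $\mathbb{E}[Z^+]>0$. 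The same argument gives $\mathbb{E}[Z^-]>0$. Substituting back the definitions of $\ell^1_t$ and $s_A$ then yields the strict inequality.

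As a sanity check, the identity also shows the converse: equality in Prop.~\ref{prop:individual} holds exactly when $Z$ is almost surely of one sign. That is, individual curves that do not cross the reference give no gain, which matches the intuition of Fig.~\ref{fig:illustration}.
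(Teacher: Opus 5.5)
Your proof is correct. It rests on the same fact as the paper's proof: equality in $\mathbb{E}[|Z|]\geq|\mathbb{E}[Z]|$ forces $Z$ to have a single sign.

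The routes differ in how that fact is obtained. The paper simply cites the equality case of the triangle inequality, stated pointwise as ``for all $\mathbf{x}$ selectable by $\sigma_A$'', and concludes by contradiction. You instead prove it through the decomposition $Z=Z^+-Z^-$ and the identity $\mathbb{E}[|Z|]-|\mathbb{E}[Z]|=2\min(\mathbb{E}[Z^+],\mathbb{E}[Z^-])$.

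Your route buys three things:
\begin{enumerate}
\item an explicit size for how much the individual-level measure exceeds the group-level one;
\item the correct almost-sure form of the equality condition, whereas the paper's ``for all $\mathbf{x}$'' is too strong for the ``only if'' direction;
\item a precise reading of the hypothesis: the sets $\{s(\mathbf{x})>s_B\}$ and $\{s(\mathbf{x})<s_B\}$ each have positive conditional mass within $\{\sigma_A=1\}$. This is what the paper's phrase about single points of strictly positive probability measure is implicitly after, and it covers the atomic reading as a special case.
\end{enumerate}
The paper's route buys only brevity, by leaning on a standard fact.

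One small imprecision is in your parenthetical alternative. Continuity of $s$ alone does not give positive mass to a neighbourhood of $\mathbf{x}^{(1)}$ or $\mathbf{x}^{(2)}$. You would also need these points to lie in the support of $P_{\mathbf{X}\mid\sigma_A=1}$.
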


\begin{proof}
    Proposition \ref{prop:individual} follows from the triangle inequality property of the absolute value function. Equality holds iff all arguments are aligned, i.e.~either $s(\mathbf{x})\geq s_B$ or $s(\mathbf{x})\leq s_B$ for all $\mathbf{x}$ selectable by $\sigma_A$. The existence of $\mathbf{x}^{(1)}$ and $\mathbf{x}^{(2)}$ contradicts this, hence the inequality is strict.
\end{proof}

\newpage

\section{The \ourmethod Algorithm}
\label{apx:algorithm}

In this section, we provide the pseudocode for \ourmethod in Alg.~\ref{alg:learn_sg}. Prior to executing \ourmethod, we fit a global non-parametric population model to the entire dataset $\mathit{RSF}(\mathbf{X}, T, \Delta)$. This gives per-subject survival functions over the discrete domain of unique event times $\hat{S}_D(t^*\mid \mathbf{X}=\mathbf{x}^{(i)})$ as the matrix $M$. The rule set $R$ is initialized as $\emptyset$ and is populated with subsequently discovered subgroups, up to a user-defined limit. On the subject level $\mathbf{x}^{(i)}$, the following steps are applied. First, each feature $x^{(i)}_j$ is binned with learned cut points $\alpha_j$ and $\beta_j$ using the soft condition to obtain $\hat{\pi}(\mathbf{x}^{(i)};\boldsymbol{\alpha},\boldsymbol{\beta},\tau)$. Next, by means of the weights $w_j$, we combine per-feature conditions $\hat{\pi}(x^{(i)}_j)$ into a conjunction $\hat{\sigma}(\mathbf{x}^{(i)};\boldsymbol{\alpha},\boldsymbol{\beta},\mathbf{w},\tau)$. Then, we estimate the expected per-subject deviation in survival within the subgroup w.r.t. the overall population according to our objective. In a backwards pass, the soft rule parameters are updated to select a more exceptional subgroup for the remainder of the epochs. We anneal the temperature $\tau$ once halfway through the epochs and once three quarters of the way. Note that $\#\mathit{events}$ denotes the number of unique event times $t^*$ in $D$, where $t^*\in\textnormal{uniq}(\{t_i\sim P_T\mid\delta_i=1\})$. The \textsc{MeasureExceptionality} algorithm, used in Alg.~\ref{alg:learn_sg}, computes exceptionality using the trapezoidal rule over the absolute differences in survival between each subject and the reference population survival. See Appx.~\ref{apx:complexity} for a discussion on computational complexity and optimizations.

\begin{algorithm}[]
    \caption{\textsc{LearnSubgroup}}
    \label{alg:learn_sg}
    \begin{algorithmic}[1]
        \STATE \textbf{Input:} design matrix $\mathbf{X}$, population model $M$, rules $R$, size penalty $\gamma$, initial temperature $\tau$
        \STATE \textbf{Output:} a rule $\hat{\sigma}$ that selects an exceptional subgroup
        \STATE $\alpha_j\gets\min\;X_j$
        \STATE $\beta_j\gets\max\;X_j$
        \STATE $w_j\gets 1$
        \STATE $\text{Initialize soft rule} \;\hat{\sigma}(\cdot;\boldsymbol{\alpha},\boldsymbol{\beta},\mathbf{w},\tau)$
        \STATE $S_D(t^*)\gets \frac{1}{n}\sum^n_{i=1}S(t^*\mid\mathbf{x}^{(i)})$
        \STATE $\mathit{Exceptionality}_D(\mathbf{x})\gets\textsc{MeasureExceptionality}(\mathbf{x}, M, S_D(t^*))$
        \FOR{$e\gets 1$ \textbf{to} $\;\mathit{\#epochs}$}
            \STATE $\mathit{Membership}(\mathbf{x})\gets \hat{\sigma}(\mathbf{x};\boldsymbol{\alpha},\boldsymbol{\beta},\mathbf{w},\tau)$
            \STATE $\mathit{Size}\gets \sum^n_{i=1}\mathit{Membership}(\mathbf{x}^{(i)})$
            \STATE $\hat{\phi}\gets \sum^n_{i=1}\mathit{Membership}(\mathbf{x}^{(i)})\cdot\mathit{Exceptionality}_D(\mathbf{x^{(i)}})$
            \STATE $\mathit{Weighted\_}\hat{\phi}\gets\hat{\phi}\cdot\mathit{Size}^{\gamma-1}$
            \STATE $\mathit{Regularizer}\gets0$
            \FORALL{$\hat{\sigma}_g\in R$}
                \STATE $\mathit{Size}_{g}\gets \sum^n_{i=1}\hat{\sigma}_g(\mathbf{x}^{(i)})$
                \STATE $S_g(t^*)\gets \frac{1}{\mathit{Size}_g}\sum^n_{i=1}S(t^*\mid\mathbf{x}^{(i)})\cdot \hat{\sigma}_g(\mathbf{x}^{(i)})$
                \STATE $\mathit{Exceptionality}_g(\mathbf{x})\gets\textsc{MeasureExceptionality}(\mathbf{x}, M, S_g(t^*))$
                \STATE $\hat{\phi}_g\gets\sum^n_{i=1}\mathit{Membership}(\mathbf{x}^{(i)})\cdot\mathit{Exceptionality}_g(\mathbf{x^{(i)}})$
                \STATE $\mathit{Regularizer}\gets \mathit{Regularizer}+\hat{\phi}_g$
            \ENDFOR
            \STATE $\mathit{Regularizer}\gets\mathit{Regularizer}\cdot\mathit{Size}^{\gamma-1}$
            \STATE $\mathit{loss}\gets-\mathit{Weighted\_}\hat{\phi}-\mathit{Regularizer}$
            \STATE $\text{Update the parameters of rule $\hat{\sigma}$ to minimize the loss}$
            \IF{$(e=\mathit{\#epochs}/2)\lor(e=\mathit{\#epochs}\cdot3/4)$}
                \STATE $\tau\gets \tau/2$
            \ENDIF
        \ENDFOR
        \STATE \textbf{return} $\;\hat{\sigma}$
    \end{algorithmic}
\end{algorithm}

\begin{algorithm}[]
    \caption{\textsc{MeasureExceptionality}}
    \label{alg:int_abs_diff}
    \begin{algorithmic}[1]
        \STATE \textbf{Input:} subject covariates $\mathbf{x}$, population model $M$, reference survival at the event times $S_\mathit{ref}(t^*)$
        \STATE \textbf{Output:} subject survival exceptionality value w.r.t reference
        \STATE $\mathit{abs\_diff}\gets|\;S(t^*\mid\mathbf{x})-S_\mathit{ref}(t^*)\;|$
        \STATE \textbf{return} $\;\sum_{u=0}^{\#\mathit{events}-1}(t^*_{u+1}-t^*_u)/2\cdot\left(\mathit{abs\_diff}[u]+\mathit{abs\_diff}[u+1]\right)$
    \end{algorithmic}
\end{algorithm}

\newpage

\section{Rule Pruning}
\label{apx:pruning}

We provide the pseudocode for our post hoc pruning algorithm that simplifies rules in the presence of collinearity in Alg.~\ref{alg:prune}. The \textsc{PruneRule} algorithm is a greedy procedure for reducing the complexity of learned rules without significantly altering its subject memberships. It begins with a full rule and iteratively identifies the least influential condition and attempts to remove it by setting its weight to zero. The algorithm then evaluates each candidate rule by computing the Jaccard similarity between its membership indicators and those of the full rule. This process continues as long as the similarity remains above a specified threshold, ensuring it still describes, more or less, the same subset of the population. It is to (optionally) be executed after learning a rule using Alg.~\ref{alg:learn_sg}.

\begin{algorithm}[H]
    \caption{\textsc{PruneRule}}
    \label{alg:prune}
    \begin{algorithmic}[1]
        \STATE \textbf{Input:} design matrix $\mathbf{X}$, learned rule $\hat{\sigma}$, Jaccard similarity $\mathit{threshold}$ (default: 0.95)
        \STATE \textbf{Output:} pruned rule $\hat{\sigma}_{\mathit{prun}}$

        \STATE $\hat{\sigma}_{\mathit{prun}} \gets \hat{\sigma}$
        \STATE $\mathit{mask}\gets\hat{\sigma}(\mathbf{X})=1$

        \WHILE{$\mathit{Jaccard}(\mathit{mask}, \hat{\sigma}_\mathit{prun}(\mathbf{X})=1) \ge \mathit{threshold}$}
            \STATE $\mathit{indices} \gets \{\;j\mid \mathit{ReLU}(w_j^{\hat{\sigma}_{\mathit{prun}}}) > 0.1\;\}$

            \STATE $\mathit{score^\star} \gets 0$
            \STATE $\mathit{idx^\star} \gets \mathit{null}$

            \FOR{each $j \in \mathit{indices}$}
                \STATE $\hat{\sigma}^\prime \gets \hat{\sigma}_{\mathit{prun}}$
                \STATE $w_j^{\hat{\sigma}^\prime} \gets 0$
                \STATE $\mathit{score} \gets \mathit{Jaccard}(\mathit{mask},\hat{\sigma}^\prime(\mathbf{X})=1)$

                \IF{$\mathit{score} > \mathit{score^\star}$}
                    \STATE $\mathit{score^\star} \gets \mathit{score}$
                    \STATE $\mathit{idx^\star} \gets j$
                \ENDIF
            \ENDFOR

            \IF{$\mathit{score^\star} < \mathit{threshold}$}
                \STATE \textbf{break}
            \ENDIF

            \STATE $w_{\mathit{idx^\star}}^{\hat{\sigma}_{\mathit{prun}}} \gets 0$
        \ENDWHILE
        \STATE \textbf{return} $\hat{\sigma}_{\mathit{prun}}$
    \end{algorithmic}
\end{algorithm}

\newpage

\section{Statistical Validation}
\label{apx:validation}

We provide the pseudocode of our instantiation for building the model of false discoveries (DFD), proposed by~\citet{duivesteijn:2011:exploiting}, in Alg.~\ref{alg:build_dfd}. The \textsc{BuildDFD} algorithm quantifies the exceptionality that can be expected by chance. By repeatedly shuffling the survival outcomes relative to the design matrix, the algorithm creates a series of null datasets where no true relationship exists. In each iteration, it runs \ourmethod while recording the exceptionality scores (line 12 of Alg.~\ref{alg:learn_sg}, divided by the subgroup size from line 11) obtained from these null datasets. It returns the mean and standard deviation that parametrize a normal distribution of false exceptionalities. This distribution serves as a baseline (null model) for judging the significance of discovered subgroups in terms of their exceptionalities, subject to multiple hypothesis testing. For that, we use the Z-test to obtain one-tailed p-values with a nominal alpha cutoff of 0.05. \textsc{BuildDFD} is to be executed before running Alg.~\ref{alg:learn_sg}. \emph{$\#runs$ should be no less than 1000.}

In Fig.~\ref{fig:dfd}, we show an example of the distribution of false discoveries obtained from 1000 runs of \textsc{BuildDFD} on the postoperative cohort from our case study, with a fitted normal distribution. The mean and standard deviation of the fitted normal distribution are $\mu=16.9$ and $\eta=2.77$, respectively.

\begin{algorithm}[H]
    \caption{\textsc{BuildDFD}}
    \label{alg:build_dfd}
    \begin{algorithmic}[1]
        \STATE \textbf{Input:} design matrix $\mathbf{X}$, time-to-event $T$, event indicator $\Delta$, size penalty $\gamma$, initial temperature $\tau$, number of independent samples $\mathit{\#runs}$ (default: 1000)
        \STATE \textbf{Output:} parameters of the Gaussian representing the distribution of false discoveries, mean $\mu$, and standard deviation $\eta$

        \STATE $\mathit{scores} \gets \emptyset$
        \STATE $\mathit{indices} \gets \{1, \dots, n\}$
        \STATE $\mathit{permutations} \gets \{\;P_r \mid P_r \text{ is a random permutation of } \mathit{indices}, r=1\dots \#runs\;\}\;$

        \FOR{each $P_r \in \mathbf{P}$}
            \STATE $T^\prime \gets T[P_r]$
            \STATE $\Delta^\prime \gets \Delta[P_r]$

            \STATE $M\gets\mathit{RSF}(\mathbf{X},T^\prime,\Delta^\prime)$

            \STATE $\hat{\sigma} \gets \textsc{LearnSubgroup}(\mathbf{X}, M, \emptyset, \gamma, \tau)$

            \STATE $\mathit{scores} \gets scores\cup\{\hat{\phi}\cdot\mathit{Size}^{-1}\mid\hat{\phi} \text{ is the exceptionality of the rule }\hat{\sigma}\}$
        \ENDFOR

        \STATE $\mu \gets \text{mean}(\mathit{scores})$
        \STATE $\eta \gets \text{std}(\mathit{scores})$

        \STATE \textbf{return} $\mu,\eta$
    \end{algorithmic}
\end{algorithm}

\begin{figure}[b]
    \centering
    \begin{tikzpicture}
        \begin{axis}[
            pretty line,
            pretty grid line,
            width=\textwidth/2,
            xlabel={Score},
            ytick={0,0.05,0.10,0.15},
            yticklabel style={/pgf/number format/fixed},
            xtick={0,5,10,15,20,25,30},
            ylabel style={yshift=10pt},
            xlabel style={yshift=-10pt},
            ymin=0,
            xmin=0,
            ylabel={Probability density},
            legend style={
                at={(1.3,0.9)},
                anchor=north east,
                legend columns=1,
            }
        ]

            \addplot [
                pretty ybar interval,
                fill=pr-color1a!50,
                draw=pr-color1a!80,
                opacity=0.6,
                hist={
                    bins=30,
                    density=true
                }
            ] table [col sep=comma, y=score] {data/exps/mmc6_nCounter_PostOp_0.1_dfd_scores.csv};
            \addlegendentry{Sample data}

            \def\MU{16.9}
            \def\SIGMA{2.77}

            \addplot [
                very thick, draw=pr-color1b,
                domain=\MU-6*\SIGMA:\MU+6*\SIGMA,
                samples=100
            ] {exp(-(x-\MU)^2 / (2*\SIGMA^2)) / (\SIGMA * sqrt(2*pi))};
            \addlegendentry{$\mathcal{N}(\MU, \SIGMA)$}

            \draw [dashed, thick, draw=pr-color1b] (axis cs:\MU, \pgfkeysvalueof{/pgfplots/ymin}) -- (axis cs:\MU, \pgfkeysvalueof{/pgfplots/ymax});

        \end{axis}
    \end{tikzpicture}
    \caption{Histogram of the exceptionality scores obtained from 1000 runs of \textsc{BuildDFD} on the postoperative cohort from our case study, with a fitted normal distribution. The mean and standard deviation of the fitted normal distribution are $\mu=16.9$ and $\eta=2.77$, respectively.}
    \label{fig:dfd}
\end{figure}

\newpage

\section{Hyperparameters of \ourmethod and Baselines}
\label{apx:hyperparams}

In this section, we discuss the hyperparameter choices for \ourmethod and the baselines used in our experiments.

\subsection*{\ourmethod}

We have two core hyperparameters ($\gamma$ and $\tau$) in \ourmethod, and some more dictated by the choice of the regression model. The most important parameter is $\gamma$. Tuning $\gamma$ is crucial for steering the learning towards rules that select subgroups of a desired size range. Values of $\gamma$ range from 0.0 (very small subgroup) to 1.0 (a very large subgroup) at 0.1 increments. For our experiments, we optimize $\gamma$ and found that $\gamma\in[0,0.2]$ was almost always selected. As for $\tau$, we found that $\tau=0.2$ worked well across settings and datasets. We have secondary parameters, namely, the number of epochs and the learning rate used for stochastic optimization. Generally, 1000 epochs paired with a learning rate of 0.01 ensured stable convergence. Since we always start with a full rule, i.e.~a rule that selects all subjects by spanning the entire feature space, initialization does not affect the optimization, and because we anneal the temperature, this ensures convergence to a (local) minimum.

In this work, we instantiated \ourmethod using Random Survival Forests, but in theory it can be done using any non-parametric continuous-time regression model. We do not sample features for each tree, despite the computational cost, to avoid missing important features. However, we do sample subjects, especially when $n$ is very large. To train the RSF, we used 100 trees (300 for the case study, to account for its rather small dataset sizes), a maximum depth equal to double the number of features, and a maximum of 2000 subjects per tree. A rule of thumb for getting a reliable splitting is to have at least 10--20 events in a cohort. As such, we set the minimum number of subjects to split to 40 and, correspondingly, the minimum number of subjects in a leaf to 20.

\subsection*{Baselines}

We use the implementations of the baselines as provided by their authors. We take the top-5 subgroups discovered by each baseline for evaluation. We preprocess the data in the same way for each of the baselines. This only involves the use of a $k$-bin discretizer for continuous features using $k=5$ and the \emph{quantile} strategy.

\section{Computational Complexity of \ourmethod}
\label{apx:complexity}

Modern implementations of RSFs are efficient. Nevertheless, fitting the population model requires the most time. During training, we have a computational complexity of $\mathcal{O}(\mathit{\#trees}\cdot N\cdot V\cdot d/\mathit{\#cores})$. $\#trees$ is fixed, $N$ is the number of subjects assigned to a tree, $V$ is the number of features considered for splitting, $d$ is the tree depth, and $\mathit{\#cores}$ is also fixed. After training, the computational complexity of predicting on the entire dataset is $\mathcal{O}(\mathit{\#trees}\cdot N \cdot d/\mathit{\#cores})$, which happens only once, and the results are saved.

During learning, the computational complexity is $\mathcal{O}(\mathit{\#epochs}\cdot N\cdot V)$, where $\mathit{\#epochs}$ is fixed. We omit this in Alg.~\ref{alg:learn_sg}, but we save unchanging data structures for efficiency. All the computations are done using highly optimized matrix operations through specialized libraries.

\textbf{Hardware.} For our experiments, we used machines equipped with 2x AMD Epyc 7773x 2.2GHz (base), 3.5GHz (max boost) with 128 real cores and 2TB of memory.

\newpage

\section{Survival Data Generation}
\label{apx:data}

In Alg.~\ref{alg:make_sd}, we provide the pseudocode for, \textsc{MakeSurvivalData}, the routine we use for generating synthetic survival data. \textsc{MakeSurvivalData} embeds a hidden subgroup within a larger population. It selects $k$ features to define a hyper-box, or \emph{target}, region in the feature space and partitions subjects based on whether their features fall within these bounds. It assigns subjects different Weibull-distributed event times based on their subgroup status, while also incorporating a linear covariate influence for variability. It also validates the resulting scales and adjusts them such that they are always positive and do not overlap between the subgroup and the population. Finally, it applies a censoring mechanism to a percentage of the subjects, where censored subjects have their event times drawn uniformly between 0 and their true event time.

\begin{algorithm}[H]
    \caption{\textsc{MakeSurvivalData}}
    \label{alg:make_sd}
    \begin{algorithmic}[1]
        \STATE \textbf{Input:} number of samples $n$, number of covariates $p$, number of true conditions $k$, Weibull scale of non-subgroup subjects $\mathit{scale_\mathit{nsg}}$, Weibull shape of non-subgroup subjects $\mathit{shape_\mathit{nsg}}$, Weibull scale of subgroup subjects $\mathit{scale_\mathit{sg}}$, Weibull shape of subgroup subjects $\mathit{shape_\mathit{sg}}$, percentage of subjects in the subgroup $\mathit{ratio_\mathit{target}}$, percentage of censored subject outcomes $\mathit{ratio_\mathit{cens}}$
        \STATE \textbf{Output:} synthetic design matrix $\mathbf{X}$, synthetic time-to-event $T$, synthetic event indicator $\Delta$
        \STATE $\mathcal{V} \gets \mathit{Sample}(p, k, \text{without replacement})$
        \STATE $\epsilon \leftarrow \sqrt[k]{\mathit{ratio_\mathit{target}}}$
        \FOR{$j \gets 1$ \textbf{to} $\;k$}
            \STATE $\alpha_j \sim \mathit{Uniform}(0,1-\epsilon)$
        \ENDFOR
        \STATE $\mathbf{X} \sim \mathit{Uniform}(0, 1)^{n \times p}$
        \FOR{$j \gets 1$ \textbf{to} $\;k$}
            \STATE $\mathbf{X}_\mathit{sg}[:, j] \sim \mathit{Uniform}(\alpha_j, \alpha_j+\epsilon)^{n}$
            \STATE $\mathbf{X}_\mathit{nsg}[:, j] \sim \mathit{Uniform}(0, 1)^{n}$
            \WHILE{$\exists\;\mathbf{x}_\mathit{nsg}[j]\;\mid\;\alpha_j\leq\mathbf{x}_\mathit{nsg}[j]\leq\alpha_j+\epsilon$}
                \STATE $\mathbf{X}_\mathit{nsg}[:, j] \sim \mathit{Uniform}(0, 1)^{n}$
            \ENDWHILE
        \ENDFOR
        \STATE $\sigma \sim \mathit{Bernoulli}(\mathit{ratio_\mathit{target}})^{n}$
        \FOR{$j \gets 1$ \textbf{to} $\;k$}
            \STATE $\mathbf{X}[\sigma=1, v_j] \gets \mathbf{X}_\mathit{sg}[\sigma=1,j]$
            \STATE $\mathbf{X}[\sigma=0, v_j] \gets \mathbf{X}_\mathit{nsg}[\sigma=0,j]$
        \ENDFOR
        \STATE $\mathcal{N}_0 \gets \sum_{j=1}^k \mathbf{X}[:, v_j] - k/2$
        \STATE $\psi\gets 1$
        \STATE $\mathit{scale_\mathit{sg}} \gets \mathit{scale_\mathit{sg}} + \mathcal{N}_0 \cdot \psi$
        \STATE $\mathit{scale_\mathit{nsg}} \gets \mathit{scale}_{\mathit{nsg}} + \mathcal{N}_0 \cdot \psi$
        \WHILE{$\mathit{min}(\mathit{scale_\mathit{sg}})<0\lor\mathit{min}(\mathit{scale_\mathit{nsg}})<0\lor|\mathit{Supp}(\mathit{scale_\mathit{{nsg}}})\cap\mathit{Supp}(\mathit{scale_\mathit{sg}})|>0$}
            \STATE $\psi\gets\psi\cdot0.9$
        \ENDWHILE
        \STATE $Y_{\mathit{sg}} \leftarrow \mathit{scale_\mathit{sg}} \cdot \weibull(\mathit{shape_\mathit{sg}},1)$
        \STATE $Y_{\mathit{nsg}} \leftarrow \mathit{scale_\mathit{nsg}} \cdot \weibull(\mathit{shape_\mathit{nsg}},1)$
        \STATE $Y[\sigma=1] \leftarrow Y_{\mathit{sg}}[\sigma=1]$
        \STATE $Y[\sigma=0] \leftarrow Y_{\mathit{nsg}}[\sigma=0]$
        \STATE $\Delta \sim \mathit{Bernoulli}(1 - \mathit{ratio_\mathit{cens}})$
        \STATE $T[\Delta = 0] \sim \mathit{Uniform}(0, Y[\Delta = 0])$
        \STATE $T[\Delta = 1] \gets Y[\Delta = 1]$
        \STATE \textbf{return} $\mathbf{X}$, $T$, $\Delta$
    \end{algorithmic}
\end{algorithm}

\newpage

\section{Exceptionality Metrics}
\label{apx:exceptionality}

To evaluate the discovered subgroups on real-world data across methods, we use our (discrete-time) exceptionality measure ($\Phi$), the two-sample logrank statistic ($\mathit{LR}$), and the absolute mean-shift ($\mathit{AMS}$), between the subgroup $Q$ and the population $D$. Those are formally defined as
\begin{align}
    \Phi(Q, D) &= \frac{1}{|Q|}\sum_{i\in Q}\;\sum_{\textit{event times}}\left|\hat{S}(t\mid\mathbf{x}^{(i)}) - \hat{S}_D(t)\right| \\
    AMS(Q, D) &= \left|\left(\frac{1}{|Q|}\sum_{i\in Q}t^{(i)}\right) - \left(\frac{1}{|D|}\sum_{i\in D}t^{(i)}\right)\right| \\
    LR(Q, D) &= \frac{\sum_{\textit{event times}}(d^Q_t-e^Q_t)}{\sqrt{\sum_{\textit{event times}}e^Q_t\cdot(\frac{r_t-d_t}{r_t})\cdot(\frac{r_t-r^Q_t}{r_t-1})}}\;,
\end{align}
where at-risks $r^\circ_t=\left|\{i\in \circ\mid (t^{(i)} > t\land\delta^{(i)}=1)\lor\delta^{(i)}=0\}\right|$, had-events $d^\circ_t=\left|\{i\in \circ\mid t^{(i)}=t\land \delta^{(i)}=1\}\right|$, total at-risks $r_t=r^Q_t+r^D_t$, total had-events $d_t=d^Q_t+d^D_t$, and expected-events $e^\circ_t=d_t\cdot\frac{r^\circ_t}{r_t}$. We use a standard library to compute $\mathit{LR}$.

\section{Connection Between Our Measure and the Logrank}
\label{apx:link2logrank}

We can trace a transition from the continuous-time one-sample logrank to our proposed integral measure in Eq.~\ref{eq:group_abs_dev}. The one-sample logrank essentially compares the observed number of events $d$ to the expected number of events $e$ at the group level across all event times
\begin{align*}
    \mathit{Logrank}&\propto\sum_\textit{event times}\left(d^\mathit{grp}_t-e^\mathit{grp}_t\right) \\
    &=\sum_\textit{event times}r_t^\mathit{grp}\left(\frac{d^\mathit{grp}_t}{r^\mathit{grp}_t}-\frac{d^\mathit{ref}_t}{r^\mathit{ref}_t}\right) \\
    &=\sum_\textit{event times}r_t^\mathit{grp}\left(\hat{\lambda}_\mathit{grp}(t)-\hat{\lambda}_\mathit{ref}(t)\right) \\
    &=\sum_\textit{event times}r_t^\mathit{grp}\Delta\left(\hat{\Lambda}_\mathit{grp}(t)-\hat{\Lambda}_\mathit{ref}(t)\right)\;,
\end{align*}
where $e^\mathit{grp}_t=d_t^\mathit{ref}\cdot r_t^\mathit{grp}/r_t^\mathit{ref}$, $r$ refers to the number of individuals at risk, “$\mathit{grp}$” and “$\mathit{ref}$” refer to the group and reference populations, respectively, $\hat{\lambda}_\circ(t)$ is the hazard function, $\hat{\Lambda}_\circ(t)$ is the cumulative hazard function, and $\Delta(\cdot)$ denotes the discrete step-wise increment.

\paragraph{From Statistic to Distance} The full version divides this score (the numerator) by the pooled standard error to yield a symmetric test statistic. However, because variance is inversely proportional to sample size, this division dynamically scales the output, converting it from a spatial distance into a statistic. To build a valid $L^1$ geometric metric, we must isolate the spatial numerator. Letting $\rho_A(t)$ be the continuous at-risk process for group $A$, this numerator is natively written as a continuous Stieltjes integral over the study duration $\mathcal{T} = [0, t_{\max}]$,
\[
    \mathit{Logrank} \propto \int_{\mathcal{T}} \rho_A(t) \, d\big(\hat{\Lambda}_A(t) - \hat{\Lambda}_B(t)\big) \;.
\]
While isolating the numerator preserves geometric magnitude, it exposes four restrictive characteristics, which we sequentially dismantle to arrive at our integral-based distance metric.

\begin{restatable}{proposition}{ChangeToState}
    \label{prop:state_vs_differential}
    Let $\Sigma_\Lambda(t) = \hat{\Lambda}_A(t) - \hat{\Lambda}_B(t)$ be the continuous cumulative hazard state difference over the time domain $\mathcal{T} = [0, t_{\max}]$. Integrating the state directly over the domain is equivalent to integrating its differentials scaled by a temporal persistence weight, such that
    \[
        \int_{\mathcal{T}} \Sigma_\Lambda(t) \, dt = t_{\max} \Sigma_\Lambda(0) + \int_{\mathcal{T}} (t_{\max} - t) \, d\Sigma_\Lambda(t) \;.
    \]
\end{restatable}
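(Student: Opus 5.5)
The identity is integration by parts (a Fubini swap) for Stieltjes integrals. The plan is to write the state $\Sigma_\Lambda(t)$ as its initial value plus its accumulated increments and then exchange the order of integration. We assume, as is implicit for cumulative hazards, that $\hat{\Lambda}_A$ and $\hat{\Lambda}_B$ are of bounded variation on $\mathcal{T}$. This holds because they are nondecreasing, and in the estimated case right-continuous step functions. Hence $\Sigma_\Lambda$ is of bounded variation, and $d\Sigma_\Lambda$ is a finite signed measure on $\mathcal{T}$.

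First, for every $t\in\mathcal{T}$ we write
\[
\Sigma_\Lambda(t)=\Sigma_\Lambda(0)+\int_{(0,t]} d\Sigma_\Lambda(u).
\]
Integrating over $t\in[0,t_{\max}]$, the constant term contributes $t_{\max}\Sigma_\Lambda(0)$. The remaining term is the double integral $\int_0^{t_{\max}}\int_{(0,t]} d\Sigma_\Lambda(u)\,dt$, which we rewrite as $\int\!\!\int \mathbf{1}\{u\le t\}\,d\Sigma_\Lambda(u)\,dt$ over $\mathcal{T}\times\mathcal{T}$.

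Second, we apply Fubini's theorem to the product of Lebesgue measure and the signed measure $d\Sigma_\Lambda$. This is justified because the integrand is bounded and the total variation of $d\Sigma_\Lambda$ on $\mathcal{T}$ is finite; if needed, we split $d\Sigma_\Lambda$ into the Jordan decomposition $d\hat{\Lambda}_A-d\hat{\Lambda}_B$ and apply Tonelli to each nonnegative part. Swapping the order gives
\[
\int_{\mathcal{T}}\Big(\int_u^{t_{\max}}dt\Big)\,d\Sigma_\Lambda(u)=\int_{\mathcal{T}}(t_{\max}-u)\,d\Sigma_\Lambda(u),
\]
which is the claimed persistence-weighted integral.

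As an alternative route, the same result follows from the Riemann–Stieltjes integration-by-parts formula with $f(t)=t-t_{\max}$ and $g=\Sigma_\Lambda$:
\[
\int f\,dg=f g\big|_0^{t_{\max}}-\int g\,df = t_{\max}\Sigma_\Lambda(0)-\int \Sigma_\Lambda\,dt.
\]
The boundary term at $t_{\max}$ vanishes because $f(t_{\max})=0$. This form makes transparent that the weight $t_{\max}-t$ is exactly the remaining time over which a jump at $t$ persists in the state.

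The main obstacle is the measure-theoretic bookkeeping at the endpoints, namely whether the interval is $(0,t]$ or $[0,t]$ and whether an atom of $d\Sigma_\Lambda$ sits at $t=0$. We handle this by using the convention that $d\Sigma_\Lambda$ integrates over $(0,t_{\max}]$, so that the $t=0$ value is carried by the boundary term $t_{\max}\Sigma_\Lambda(0)$. With right-continuous step functions, the Lebesgue integral in $t$ is insensitive to the values at the countably many jump points, so the choice of left or right limits does not affect the identity. In practice $\Sigma_\Lambda(0)=0$, since both cumulative hazards start at zero, but the identity holds without this assumption.
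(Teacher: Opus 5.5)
Your proof is correct, and your primary route differs from the paper's. Your ``alternative route'' (Riemann--Stieltjes integration by parts with $f(t)=t-t_{\max}$, $g=\Sigma_\Lambda$, and the boundary term at $t_{\max}$ vanishing) is exactly the paper's proof, which takes $u=\Sigma_\Lambda$, $dv=dt$, $v=t-t_{\max}$ and reads off $t_{\max}\Sigma_\Lambda(0)$.

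Your main argument instead writes $\Sigma_\Lambda(t)=\Sigma_\Lambda(0)+\int_{(0,t]}d\Sigma_\Lambda$ and swaps the order of integration. The weight $t_{\max}-u$ then appears directly as the Lebesgue measure of $\{t\in\mathcal T: t\ge u\}$, the length of time an increment at $u$ persists in the state. That is exactly the ``temporal persistence'' interpretation the paper attaches to the identity afterwards, whereas integration by parts only produces it after rearrangement.

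The Fubini route also makes you state two hypotheses that the paper's one-line argument leaves implicit:
\begin{itemize}
\item $\Sigma_\Lambda$ has finite total variation. This is what makes the estimated step-function cumulative hazards admissible, even though the paper invokes ``continuous'' integration by parts.
\item The measure $d\Sigma_\Lambda$ lives on $(0,t_{\max}]$ rather than the closed $\mathcal T$. Otherwise, extending $\Sigma_\Lambda$ by $0$ to the left, an atom at $0$ would double-count $\Sigma_\Lambda(0)$ against the boundary term.
\end{itemize}
The price is heavier measure-theoretic machinery. Since Stieltjes integration by parts is itself usually proved by this same Fubini swap, the two arguments are the same identity viewed at different levels of abstraction.

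One terminological slip: $d\hat\Lambda_A-d\hat\Lambda_B$ is not the Jordan decomposition, because the two measures need not be mutually singular. Any splitting into finite nonnegative measures suffices for applying Tonelli to each part, so nothing in the argument breaks.
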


We provide the proof at the end of this section.

\paragraph{From Changes to States} The logrank integrates the hazard differentials $d(\cdot)$, overlooking prior hazard increments. By Prop.~\ref{prop:state_vs_differential}, evaluating the integral of the accumulated \textit{state} directly is better. It inherently applies a temporal persistence weight $(t_{\max} - t)$ to past deviation and naturally captures any initial baseline gap ($t_{\max} \Sigma_\Lambda(0)$), circumventing the logrank's memoryless aggregation.

\begin{restatable}{proposition}{HazardToSurvival}
    \label{prop:hazard_survival_mvt}
    Given two groups $A$ and $B$ for which the expected group-level survival probabilities at any fixed time $t$ are $\hat{S}_A(t)$ and $\hat{S}_B(t)$, respectively, and for which $\hat{\Lambda}_A(t)$ and $\hat{\Lambda}_B(t)$ are their corresponding cumulative hazard functions. The difference in their cumulative hazards is proportional to the difference in survival $\hat{S}_A(t) - \hat{S}_B(t)$, such that
    \[
        \hat{\Lambda}_A(t) - \hat{\Lambda}_B(t) = -\frac{\hat{S}_A(t) - \hat{S}_B(t)}{\hat{S}^*(t)} \;,
    \]
    where $\hat{S}^*(t)$ is exactly the logarithmic mean of the two survival probabilities, formally defined to account for potential curve crossing as
    \[
        \hat{S}^*(t) = 
        \begin{cases} 
            \frac{\hat{S}_A(t) - \hat{S}_B(t)}{\ln(\hat{S}_A(t)) - \ln(\hat{S}_B(t))} & \text{if } \hat{S}_A(t) \neq \hat{S}_B(t) \\
            \hat{S}_A(t) & \text{if } \hat{S}_A(t) = \hat{S}_B(t) 
        \end{cases} \;,
    \]
    which is strictly bounded by their geometric and arithmetic means, $\sqrt{\hat{S}_A(t) \hat{S}_B(t)} < \hat{S}^*(t) < \frac{\hat{S}_A(t) + \hat{S}_B(t)}{2}$ (for $\hat{S}_A(t) \neq \hat{S}_B(t)$).
\end{restatable}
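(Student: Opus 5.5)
The identity follows from $\hat{\Lambda}(t) = -\ln \hat{S}(t)$, which connects each cumulative hazard to its survival function. The bounds are the classical geometric--logarithmic--arithmetic mean inequality. I take $\hat{S}_A(t),\hat{S}_B(t)\in(0,1]$ so the logarithms are finite. This is implicit in the statement, since $\hat{S}^*$ involves $\ln \hat{S}_A$ and $\ln\hat{S}_B$.

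\emph{Step 1 (the identity).} Fix $t$ and write $a=\hat{S}_A(t)$, $b=\hat{S}_B(t)$. Then $\hat{\Lambda}_A(t)-\hat{\Lambda}_B(t) = -(\ln a - \ln b)$.

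If $a\neq b$, then $\ln a \neq \ln b$ by strict monotonicity of $\ln$. So $\hat{S}^*(t)=(a-b)/(\ln a-\ln b)$ is well defined and nonzero. Multiplying and dividing by $a-b$ gives $-(\ln a-\ln b) = -(a-b)/\hat{S}^*(t)$, as claimed. Because numerator and denominator have the same sign, $\hat{S}^*(t)>0$ regardless of which curve lies above. This is what makes the formula valid when the curves cross.

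If $a=b$, both sides are zero, so the identity holds with $\hat{S}^*(t)=a$. This choice is also the continuous extension of the logarithmic mean, since $(a-b)/(\ln a-\ln b)\to a$ as $b\to a$ (the derivative of $\ln$ at $a$ is $1/a$).

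\emph{Step 2 (the bounds).} Assume $a\neq b$. By symmetry of the logarithmic mean, take $a>b>0$ and set $x=\ln(a/b)>0$.

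For the lower bound, $\sqrt{ab}<(a-b)/\ln(a/b)$ is equivalent to $x< (a-b)/\sqrt{ab} = e^{x/2}-e^{-x/2} = 2\sinh(x/2)$. This holds because $\sinh u>u$ for $u>0$.

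For the upper bound, $(a-b)/\ln(a/b) < (a+b)/2$ is equivalent to $2(a-b)/(a+b) < x$. Dividing numerator and denominator by $\sqrt{ab}$ gives $2\tanh(x/2)<x$. This holds because $\tanh u<u$ for $u>0$.

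Both elementary inequalities follow from the functions agreeing at $0$ together with strict derivative comparisons: $\cosh u>1$ and $\operatorname{sech}^2 u<1$ for $u>0$.

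The main obstacle is not analytic. It is the modelling assumption in Step 1 that $\hat{\Lambda}=-\ln\hat{S}$, which holds exactly only for continuous survival functions. For a discrete estimator such as Nelson--Aalen or Kaplan--Meier it holds only approximately. I would state explicitly that the $\hat{\Lambda}$'s are \emph{defined} as $-\ln\hat{S}$, or assume continuity, so the identity is exact. I would also require $\hat{S}>0$ on $\mathcal{T}$, which excludes $t$ beyond the last event if a curve reaches zero.
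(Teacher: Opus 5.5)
Your proof is correct. It takes a somewhat different and more elementary route than the paper's.

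\textbf{The identity.} The paper applies the mean value theorem to $f(v)=e^{-v}$ between $\hat{\Lambda}_A(t)$ and $\hat{\Lambda}_B(t)$ to obtain an intermediate $\hat{\Lambda}^*(t)$. It then defines $\hat{S}^*(t)=e^{-\hat{\Lambda}^*(t)}$, and only afterwards solves for $\hat{S}^*(t)$ to find that it is the logarithmic mean. You instead substitute $\hat{\Lambda}=-\ln\hat{S}$ directly and multiply and divide by $\hat{S}_A(t)-\hat{S}_B(t)$. Your version makes clear that, once $\hat{S}^*$ is defined as the logarithmic mean, the identity is a rewriting and the MVT step is not needed.

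What the MVT framing buys the paper is the immediate fact that $\hat{S}^*(t)$ lies strictly between $\hat{S}_A(t)$ and $\hat{S}_B(t)$. Its later distortion bound $\min(1,\hat{S}_A/\hat{S}_B)<\hat{S}_A/\hat{S}^*<\max(1,\hat{S}_A/\hat{S}_B)$ relies on this. You recover the same fact as a corollary of your bounds, since $\min(a,b)<\sqrt{ab}$ and $(a+b)/2<\max(a,b)$ for $a\neq b$.

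\textbf{The bounds.} The paper only cites the geometric--logarithmic--arithmetic mean inequality as a standard property. You prove it via $x=\ln(a/b)$ together with $\sinh u>u$ and $\tanh u<u$, so your argument is self-contained.

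\textbf{The remaining points.} The case $a=b$ is treated the same way in both (the paper via L'H\^opital, you via the derivative of $\ln$). The paper's proof also uses your flagged assumptions tacitly: that $\hat{\Lambda}$ is taken to be $-\ln\hat{S}$, and that $\hat{S}>0$. The first is exact only for continuous survival functions, not for Kaplan--Meier/Nelson--Aalen pairs. Making these assumptions explicit is an improvement rather than a deviation.
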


We provide the proof at the end of this section.
    
\paragraph{From Hazards to Survival} If we apply the logrank's implicit continuous risk weighting $\rho_A(t) \approx N_A \hat{S}_A(t)$, while ignoring the effect of censoring, to the accumulated hazard state $\Sigma_\Lambda(t)$ adopted in the previous step, Prop.~\ref{prop:hazard_survival_mvt} reveals a geometric scaling behavior
\[
    \rho_A(t) \Sigma_\Lambda(t) = -\rho_A(t) \frac{\Sigma_S(t)}{\hat{S}^*(t)} \approx -N_A \underbrace{\frac{\hat{S}_A(t)}{\hat{S}^*(t)}}_{\text{Distortion}} \Sigma_S(t) \;. 
\]
The raw spatial distance $\Sigma_S(t)$ is directly multiplied by the ratio $\hat{S}_A(t) / \hat{S}^*(t)$. Because $\hat{S}^*(t)$ is tightly bounded between the two survival curves, this ratio acts as a distortion strictly constrained by $\min(1, \hat{S}_A(t)/\hat{S}_B(t)) < \hat{S}_A(t)/\hat{S}^*(t) < \max(1, \hat{S}_A(t)/\hat{S}_B(t))$, which becomes unbounded as $\hat{S}_B(t)$ approaches zero. Crucially, this multiplier is asymmetric, i.e.~weighting by $\hat{S}_A(t) / \hat{S}^*(t)$ evaluates the geometric deviation differently than if the reference groups were swapped to weight by $\hat{S}_B(t) / \hat{S}^*(t)$.

The full logrank test relies on its denominator to force symmetry back into the numerator. Because we discarded the denominator to preserve spatial geometry, we are left with this exposed, asymmetric distortion. Because a rigorous distance metric must satisfy spatial symmetry ($d(A,B) = d(B,A)$), we strip this proportional weight. Doing so geometrically resolves the numerator's asymmetry and isolates the pure spatial distance $\Sigma_S(t) = \hat{S}_A(t) - \hat{S}_B(t)$.
    
\paragraph{From Signed to Absolute} The logrank relies on the Cox proportional hazards condition for power, which states that the rates at which subgroup and reference survival decay (i.e.~hazards) are constantly proportional over time. This assumption is often violated in practice, for example, when the survival functions for groups of interest cross one another. The logrank test inherently integrates a signed difference. If the survival curves cross, their relative hazard rates invert, causing the logrank's native differentials to flip signs over the domain $\mathcal{T}$. Consequently, the logrank inherently accumulates positive and negative areas that cancel each other out, severely reducing statistical power under crossing hazards. To resolve this, we apply an absolute value to the isolated state, yielding a robust spatial distance measure $|\Sigma_S(t)| = |\hat{S}_A(t) - \hat{S}_B(t)|$. Evaluating this absolute state measures true net geometric distance without over-penalizing high-frequency estimation noise, recovering exactly $\ell^1_t(s_A, s_B)$ from the right-hand side of the inequality in Prop.~\ref{prop:individual}.

\paragraph{From Group to Individuals} Despite the temporal and spatial modifications in Steps 1 through 3, the measure thus far still operates entirely on expected group-level estimates. By definition, the group survival curve is simply the expectation of individual survival probabilities over all subjects selectable by the subgroup rule $\sigma_A$: $\hat{S}_A(t) = \mathbb{E}_{\mathbf{x}\sim P_\mathbf{X}}[\hat{S}(t\mid\mathbf{x}) \mid \sigma_A(\mathbf{x})=1]$. Evaluating the deviation at this group level inherently averages away and obscures individual heterogeneity. We resolve this by taking the expectation \textit{outside} the distance function. 

Crucially, this final step strictly depends on the absolute value applied in Step 3. If we had retained the logrank's signed difference, moving the expectation outside would have no effect due to the linearity of expectation
\[
    \mathbb{E}_{\mathbf{x}\sim P_\mathbf{X}}\big[\hat{S}(t\mid\mathbf{x}) - \hat{S}_B(t) \;\big|\; \sigma_A(\mathbf{x})=1\big] = \hat{S}_A(t) - \hat{S}_B(t) \;.
\]
However, because the absolute value function is convex, applying it \textit{before} the expectation actively captures the internal heterogeneity that the group average cancels out. By Jensen's inequality we get the left-hand side of the inequality in Prop.~\ref{prop:individual}.

Applying these four modifications in sequence, namely, integrating the accumulated state rather than differentials, mapping to survival while removing the logrank numerator's native asymmetric distortion, applying the absolute value function to prevent native cancellation, and moving the expectation outside the absolute difference to capture individual heterogeneity, exactly yields our proposed exceptionality measure from Eq.~\ref{eq:group_abs_dev}
\[
    \phi(\sigma_A, \sigma_B)=\mathbb{E}_{\mathbf{x}\sim P_\mathbf{X}}\left[\ell^1_\mathcal{T}(\hat{S}(t\mid\mathbf{x}), \hat{S}_B(t)) \;\middle|\; \sigma_A(\mathbf{x})=1 \right] \;,
\]
where $\sigma_A$ simplifies to $\sigma$, and $B$ stands for the reference population/dataset $D$.

\subsection*{Proofs of Propositions~\ref{prop:state_vs_differential} and~\ref{prop:hazard_survival_mvt}}
\label{sec:supporting_props}

\ChangeToState*

\begin{proof}
    We apply the standard rule of continuous integration by parts, $\int u \, dv = uv - \int v \, du$, to the integral of the state over $[0, t_{\max}]$. Let $u = \Sigma_\Lambda(t)$ and $dv = dt$. We select the valid antiderivative $v = (t - t_{\max})$. Therefore, $du = d\Sigma_\Lambda(t)$. Substituting these into the integration by parts formula yields
    \[
        \int_0^{t_{\max}} \Sigma_\Lambda(t) \, dt = \Big[ \Sigma_\Lambda(t) (t - t_{\max}) \Big]_0^{t_{\max}} - \int_0^{t_{\max}} (t - t_{\max}) \, d\Sigma_\Lambda(t) \;.
    \]
    Evaluating the boundary conditions for the first term
    \[
        \Big( \Sigma_\Lambda(t_{\max}) \cdot 0 \Big) - \Big( \Sigma_\Lambda(0) \cdot (-t_{\max}) \Big) = t_{\max} \Sigma_\Lambda(0) \;.
    \]
    Substituting this back and absorbing the negative sign into the integral gives the stated equivalence
    \[
        \int_0^{t_{\max}} \Sigma_\Lambda(t) \, dt = t_{\max} \Sigma_\Lambda(0) + \int_0^{t_{\max}} (t_{\max} - t) \, d\Sigma_\Lambda(t) \;.
    \]
\end{proof}

\HazardToSurvival* 

\begin{proof}
    In survival analysis, the (group-level) survival function is related to its corresponding cumulative hazard function thus
    \[
        \hat{S}_\circ(t) = e^{-\hat{\Lambda}_\circ(t)} \implies \hat{\Lambda}_\circ(t) = -\ln(\hat{S}_\circ(t)) \;.
    \]
    Consider the real-valued function $f(v) = e^{-v}$. This function is continuous and differentiable everywhere on $\mathbb{R}$, with its first derivative being $f'(v) = -e^{-v}$. We evaluate the difference in survival based on the cumulative hazards.
    
    \textit{Case 1: The survival functions do not cross ($\hat{S}_A(t) \neq \hat{S}_B(t)$).}
    Since $\hat{S}_A(t) \neq \hat{S}_B(t)$, it follows that $\hat{\Lambda}_A(t) \neq \hat{\Lambda}_B(t)$. We apply the Mean Value Theorem. There exists an intermediate cumulative hazard value, $\hat{\Lambda}^*(t)$, strictly between $\hat{\Lambda}_A(t)$ and $\hat{\Lambda}_B(t)$ such that
    \[
        f(\hat{\Lambda}_A(t)) - f(\hat{\Lambda}_B(t)) = f'(\hat{\Lambda}^*(t))(\hat{\Lambda}_A(t) - \hat{\Lambda}_B(t)) \;.
    \]
    Evaluating $f$ and $f'$ yields
    \[ 
        e^{-\hat{\Lambda}_A(t)} - e^{-\hat{\Lambda}_B(t)} = -e^{-\hat{\Lambda}^*(t)} (\hat{\Lambda}_A(t) - \hat{\Lambda}_B(t)) \;.
    \]
    Substituting the group-level survivals back into the left side gives
    \[
        \hat{S}_A(t) - \hat{S}_B(t) = -e^{-\hat{\Lambda}^*(t)} (\hat{\Lambda}_A(t) - \hat{\Lambda}_B(t)) \;.
    \]
    Let us define the intermediate term $\hat{S}^*(t) = e^{-\hat{\Lambda}^*(t)}$. Because $f(v) = e^{-v}$ is strictly monotonically decreasing, $\hat{S}^*(t)$ is an intermediate survival probability strictly between $\hat{S}_A(t)$ and $\hat{S}_B(t)$. Rearranging the equation to solve for $\hat{S}^*(t)$ and substituting $\hat{\Lambda}_\circ(t) = -\ln(\hat{S}_\circ(t))$ reveals its exact analytical form
    \[
        \hat{S}^*(t) = -\frac{\hat{S}_A(t) - \hat{S}_B(t)}{\hat{\Lambda}_A(t) - \hat{\Lambda}_B(t)} = -\frac{\hat{S}_A(t) - \hat{S}_B(t)}{-\ln(\hat{S}_A(t)) - (-\ln(\hat{S}_B(t)))} = \frac{\hat{S}_A(t) - \hat{S}_B(t)}{\ln(\hat{S}_A(t)) - \ln(\hat{S}_B(t))} \;.
    \]
    This confirms $\hat{S}^*(t)$ is exactly the logarithmic mean. By standard properties of the logarithmic mean for any two distinct positive real numbers, $\hat{S}^*(t)$ is strictly bounded below by the geometric mean $\sqrt{\hat{S}_A(t) \hat{S}_B(t)}$ and above by the arithmetic mean $\frac{\hat{S}_A(t) + \hat{S}_B(t)}{2}$, fulfilling the proposition's bounds. Finally, rearranging the equation to isolate the difference in cumulative hazards yields
    \[
        \hat{\Lambda}_A(t) - \hat{\Lambda}_B(t) = -\frac{\hat{S}_A(t) - \hat{S}_B(t)}{\hat{S}^*(t)} \;.
    \]

    \textit{Case 2: The survival functions cross ($\hat{S}_A(t) = \hat{S}_B(t)$).} 
    At the exact point of crossing, the cumulative hazards must also be equal ($\hat{\Lambda}_A(t) = \hat{\Lambda}_B(t)$), meaning the left side of our proposition evaluates to $0$. However, evaluating the standard logarithmic mean on the right side yields an undefined $\frac{0}{0}$ singularity. To maintain continuity in the geometric space, we evaluate the limit of the logarithmic mean as $\hat{S}_B(t)$ approaches $\hat{S}_A(t)$. Applying L'H\^{o}pital's rule with respect to $\hat{S}_B(t)$ yields
    \[
        \lim_{\hat{S}_B(t) \to \hat{S}_A(t)} \frac{\hat{S}_A(t) - \hat{S}_B(t)}{\ln(\hat{S}_A(t)) - \ln(\hat{S}_B(t))} = \lim_{\hat{S}_B(t) \to \hat{S}_A(t)} \frac{-1}{-1/\hat{S}_B(t)} = \lim_{\hat{S}_B(t) \to \hat{S}_A(t)} \hat{S}_B(t) = \hat{S}_A(t) \;.
    \]
    This limit necessitates the piecewise definition $\hat{S}^*(t) = \hat{S}_A(t)$ at the crossing point. Substituting this continuous limit into the right side yields $-(0)/\hat{S}_A(t) = 0$. The equality holds trivially, and the mapping remains perfectly continuous.

    Thus, the proposition holds continuously for all $t \in \mathcal{T}$.
\end{proof}

\newpage

\section{Additional Results}
\label{apx:additional}

In this section, we provide additional experimental results on our synthetic benchmark, where we vary different aspects of the data and subgroup generation processes. Also, we provide additional results from our real-world data regarding subgroup sizes and rule lengths found by each method, along with the runtimes. Lastly, we provide the full rules learned in our case study along with their pruned versions.

\subsection*{Synthetic Data}

We assess the runtimes of \ourmethod and each of baselines from our scalability experiment in Fig.~\ref{fig:synth_panel_bands} (left). See Appx.~\ref{apx:complexity} for details regarding computational complexity. In Fig.~\ref{fig:additional_synth_panel} (left), we see that all the methods have consistent runtimes except \EDS, which varies quite a lot as the number of dimensions increases. Expectedly, \EDS and \RK require increasingly more time unlike \ourmethod and \FIBERS. We attribute some of the increase in runtime of \ourmethod to learning the population model, which varies with the choice of the regressor. Overall, \ourmethod is the fastest method when factoring in retrieval performance. \EDS is the closest competitor but needs around 4 times more time for 1000 features.

Next, we assess the retrieval performance of each method under varying population-subgroup hazard ratios in Fig.~\ref{fig:additional_synth_panel} (center). The hazard ratio indicates the rate by which the population outsurvives the subgroup, or vice versa. We can see that \ourmethod consistently outperforms all baselines across all hazard ratios. As the hazard ratio approaches 1.0, the retrieval performance of all methods degrades since the survival distributions of the subgroup and population become increasingly similar.

Lastly, we assess the retrieval performance of each method under increasingly large datasets in Fig.~\ref{fig:additional_synth_panel} (right). We see that \ourmethod consistently outperforms all baselines across all dataset sizes. As the dataset size increases, the stability of \ourmethod improves, since more data is available to learn from.

\begin{figure}
    \centering
    \scriptsize
    \input{texfigs/additional_synth_panel.tex}
    \caption{\emph{Synthetic setting.} Comparison of \ourmethod and each of \RK, \EDS, and \FIBERS in terms of runtime recovering planted subgroups with increasingly large dataset dimensionalities \textbf{(Left)}. Lower is better. Also, in terms of $F_1$-score, comparisons between \ourmethod and baselines under varying population-subgroup hazard ratios \textbf{(Center)}, where the hazard ratio indicates the rate by which the population outsurvives the subgroup, or vice versa, and increasingly large datasets \textbf{(Right)}. Higher is better. \EDS is the closest competitor to \ourmethod closely followed by \RK. The shaded areas show ±1 standard error over 10 runs.}
    \label{fig:additional_synth_panel}
\end{figure}

\subsection*{Real-World Data}

In Table \ref{tab:subgroup_sizes}, we present the average subgroup sizes discovered by each method across datasets in our real-world experiments. We can see that \ourmethod consistently discovers much smaller subgroups than all baselines. This is expected since \ourmethod directly optimizes for subgroup size via the size penalty $\gamma$. In contrast, the baselines do not have a direct mechanism for controlling subgroup size, leading to larger subgroups. For TRACE, the subgroup size for \ourmethod changed from 1878.0 to 470.0 after adjusting $\gamma$.

In Table \ref{tab:rule_lengths}, we see that \ourmethod and \FIBERS result in rules with a median length of $\geq 3$ conditions. In contrast, each of \RK and \EDS results in a median rule length of one. We highlight the rule length of \ourmethod on the TRACE dataset because since the gamma values we tried for all datasets were too high for this dataset, the rule length is 0.0. We try other gamma values and report the rule length after adjustment to be 3.0.

In Table \ref{tab:runtimes}, we see that \ourmethod completes between 9 seconds and a little under 9 minutes with a median runtime of 18.6 seconds. We see that \RK is the fastest with a median of 2 seconds, while \EDS is the second fastest with a median of 8 seconds but with some extremely slow runs. \ourmethod comes in third with one quite slow run (9 minutes). \FIBERS is the slowest with a median of about a minute, however, it consistently takes around a minute to complete for all datasets. Nevertheless, \ourmethod requires little/competitive time when applied to benchmark real-world datasets.

\begin{table*}[]
    \centering
    \scriptsize
    \caption{Average subgroup sizes (with coverage percentages) per method across real-world datasets.}
    \label{tab:subgroup_sizes}
    \begin{tabular}{lrrrrr}
        \toprule
                  & $n$     & \ourmethod & \RK    & \EDS    & \FIBERS  \\ 
        \midrule
        UnempDur         & 3241    & 112.0 (3.5\%)  & 1620.5 (50.0\%) & 1620.5 (50.0\%) & 1585.6 (48.9\%) \\
        Nwtco            & 4028    & 168.0 (4.2\%)  & 2014.0 (50.0\%) & 1490.0 (37.0\%) & 4027.0 (100.0\%)\\
        Rott2            & 2982    & 320.0 (10.7\%) & 1825.7 (61.2\%) & 1007.2 (33.8\%) & 1210.6 (40.6\%) \\
        Rdata            & 1040    & 140.0 (13.5\%) & 520.0 (50.0\%)  & 268.8 (25.8\%)  & 503.2 (48.4\%)  \\
        Aids2            & 2839    & 306.0 (10.8\%) & 1342.6 (47.3\%) & 503.7 (17.7\%)  & 2834.0 (99.8\%) \\
        Dialysis         & 6805    & 2665.0 (39.2\%)& 4095.7 (60.2\%) & 2636.8 (38.7\%) & 2568.6 (37.7\%) \\
        TRACE            & 1878    & 1878.0 (100.0\%)& 1113.5 (59.3\%) & 657.3 (35.0\%)  & 1877.0 (99.9\%) \\
        Support2         & 9105    & 33.0 (0.4\%)   & 265.0 (2.9\%)   & 204.8 (2.3\%)   & 451.0 (5.0\%)   \\
        DataDIVAT2       & 1837    & 63.0 (3.4\%)   & 918.5 (50.0\%)  & 720.7 (39.2\%)  & 870.2 (47.4\%)  \\
        ProstateSurv.    & 14294   & 1840.0 (12.9\%)& 7147.0 (50.0\%) & 5666.5 (39.6\%) & 11127.2 (77.8\%)\\
        Actg             & 1151    & 38.0 (3.3\%)   & 445.0 (38.7\%)  & 334.6 (29.1\%)  & 1149.0 (99.8\%) \\
        Scania           & 1931    & 122.0 (6.3\%)  & 729.6 (37.8\%)  & 431.2 (22.3\%)  & 1272.2 (65.9\%) \\
        Grace            & 1000    & 33.0 (3.3\%)   & 500.0 (50.0\%)  & 321.0 (32.1\%)  & 760.2 (76.0\%)  \\ 
        \midrule
        Q1 (25\%) Cov.   & -       & 3.40\%         & 47.30\%         & 25.80\%         & 47.40\%         \\
        Median (Q2) Cov. & -       & 6.30\%         & 50.00\%         & 33.80\%         & 65.90\%         \\
        Q3 (75\%) Cov.   & -       & 12.90\%        & 50.00\%         & 38.70\%         & 99.80\%         \\
        \bottomrule
    \end{tabular}
\end{table*}

\begin{table*}[]
    \parbox{0.49\textwidth}{
        \setlength{\tabcolsep}{5pt}
        \centering
        \tiny
        \caption{Average rule length (conditions) per method across real-world datasets with $p$ features.}
        \label{tab:rule_lengths}
        \begin{tabular}{lrrrrr}
            \toprule
                    & $p$ & \ourmethod  & \RK & \EDS & \FIBERS  \\ 
            \midrule
            UnempDur         & 7   & 3.0     & 1.0     & 1.0     & 1.2     \\
            Nwtco            & 14  & 5.0     & 1.0     & 1.2     & 7.0     \\
            Rott2            & 18  & 1.0     & 1.0     & 1.2     & 3.6     \\
            Rdata            & 8   & 2.0     & 1.0     & 1.33    & 5.0     \\
            Aids2            & 15  & 2.0     & 2.2     & 2.33    & 8.0     \\
            Dialysis         & 74  & 33.0    & 1.0     & 17.6    & 12.6    \\
            TRACE            & 10  & 0.0    & 1.25    & 1.0     & 11.4    \\
            Support2         & 76  & 10.0    & 1.0     & 2.0     & 1.8     \\
            DataDIVAT2       & 7   & 2.0     & 1.0     & 1.33    & 1.2     \\
            ProstateSurv.    & 9   & 5.0     & 1.0     & 1.5     & 1.8     \\
            Actg             & 24  & 5.0     & 3.2     & 1.6     & 5.2     \\
            Scania           & 12  & 3.0     & 1.6     & 1.4     & 6.0     \\
            Grace            & 7   & 4.0     & 2.5     & 1.2     & 8.4     \\
            \midrule
            Q1 (25\%)        & -   & 2.0     & 1.0     & 1.2     & 1.8     \\
            Median (Q2)      & -   & 3.0     & 1.0     & 1.33    & 5.2     \\
            Q3 (75\%)        & -   & 5.0     & 1.6     & 1.6     & 8.0     \\
            \bottomrule
        \end{tabular}
    }
    \hfill
    \parbox{0.49\textwidth}{
        \centering
        \tiny
        \caption{Runtime (seconds) per method across real-world datasets.}
        \label{tab:runtimes}
        \begin{tabular}{lrrrr}
            \toprule
                            & \ourmethod    & \RK   & \EDS    & \FIBERS   \\ 
            \midrule
            UnempDur         & 15.24     & 1.93      & 0.82       & 65.07    \\
            Nwtco            & 75.22     & 2.59      & 81.97      & 69.65    \\
            Rott2            & 20.52     & 3.20      & 7.82       & 70.07    \\
            Rdata            & 11.17     & 0.46      & 1.76       & 60.83    \\
            Aids2            & 18.64     & 2.55      & 758.92     & 61.93    \\
            Dialysis         & 525.30    & 65.11     & 4826.16    & 72.71    \\
            TRACE            & 17.32     & 1.08      & 2.30       & 62.28    \\
            Support2         & 34.42     & 3.86      & 20.84      & 65.12    \\
            DataDIVAT2       & 13.00     & 0.61      & 5.12       & 57.25    \\
            ProstateSurv.    & 28.12     & 12.93     & 7.18       & 71.34    \\
            Actg             & 14.35     & 1.47      & 21.44      & 60.96    \\
            Scania           & 19.75     & 1.16      & 51.05      & 63.21    \\
            Grace            & 9.09      & 0.39      & 1.99       & 61.49    \\
            \midrule
            Q1 (25\%)        & 14.35     & 1.08      & 2.30       & 61.49    \\
            Median (Q2)      & 18.64     & 1.93      & 7.82       & 63.21    \\
            Q3 (75\%)        & 28.12     & 3.20      & 51.05      & 69.65    \\
            \bottomrule
        \end{tabular}
    }
\end{table*}

\subsection*{Case Study}
We present in Tables~\ref{tab:prune} and~\ref{tab:prune2} examples of rules for the primary, respectively, postoperative RCT cohorts from our case study. We also present the corresponding pruned rules, which describe subgroups whose Jaccard similarities in terms of the membership indicators to the original subgroups are no less than the threshold of 0.95. We also provide the resulting changes in subgroup size and exceptionality. For this, we use the post hoc pruning algorithm \textsc{PruneRule}, which we introduce in Alg.~\ref{alg:prune}.

We can see that the rules are reduced to a combination of a handful of conditions after pruning. In the very first rule, we see a drastic reduction in rule size. This goes to show how the problem of collinearity is amplified when $n\approx p$, when in reality, rules having only a handful of conditions could suffice. Nevertheless, this post hoc pruning method cannot narrow down the predicates to those that, in the gene expression case, activate the rest of the genes in the same gene signature.

\begin{table*}[]
    \centering
    \tiny
    \caption{Rules learned on the primary RCT cohort before and after pruning using a Jaccard similarity threshold of $0.95$ for membership indicators along with the respective changes in subgroup sizes and exceptionality.}
    \label{tab:prune}
    \begin{tabular}{p{0.4\textwidth}|p{0.4\textwidth}|p{0.1\textwidth}}
        \toprule
        \textbf{Original rule} & \textbf{Pruned rule} & \textbf{Changes} \\
        \midrule
        \emph{“$\mathit{ACTN1} \in [-3.3, -0.9] \land \mathit{AKT1} \in [-3.6, -1.9] \land \mathit{ALDH3A1} \in [-8.9, -1.4] \land \mathit{ANLN} \in [-4.3, -2.2] \land \mathit{BCL2L1} \in [-7.2, -5.3] \land \mathit{BNIP3L} \in [-3.0, -0.6] \land \mathit{CAV1} \in [-4.4, -1.1] \land \mathit{CDKN3} \in [-4.5, -2.5] \land \mathit{CXCL12} \in [-8.2, -2.6] \land \mathit{DCBLD1} \in [-6.9, -4.7] \land \mathit{ERBB2} \in [-1.9, 0.7] \land \mathit{ERBB3} \in [-7.0, -2.9] \land \mathit{GNAI1} \in [-5.2, -0.0] \land \mathit{IGF1R} \in [-3.5, -0.9] \land \mathit{LOX} \in [-6.4, -3.3] \land \mathit{MTOR} \in [-3.2, -2.3] \land \mathit{NOTCH1} \in [-4.9, -2.6] \land \mathit{PLAU} \in [-4.5, -1.2] \land \mathit{RB1} \in [-3.8, -2.4] \land \mathit{RELA} \in [-3.2, -1.8] \land \mathit{RFC4} \in [-4.6, -1.6] \land \mathit{RPA2} \in [-4.0, -2.1] \land \mathit{SNAI1} \in [-6.9, -3.3] \land \mathit{TCF3} \in [-3.2, -1.3] \land \mathit{TGFB1} \in [-2.6, -1.0] \land \mathit{TPI1} \in [-7.2, -5.3] \land \mathit{XPA} \in [-4.7, -1.6] \land \mathit{XRCC1} \in [-4.2, -2.6]$”} & \emph{“$\mathit{CDKN3} \in [-4.5, -2.5] \land \mathit{ERBB2} \in [-1.9, 0.7] \land \mathit{LOX} \in [-6.4, -3.3] \land \mathit{RB1} \in [-3.8, -2.4] \land \mathit{RPA2} \in [-4.0, -2.1] \land \mathit{TGFB1} \in [-2.6, -1.0]$”} & Subgroup size: $6\to6$ \newline\newline Exceptionality: $19.1051\to19.1037$ \\
        \midrule
        \emph{“$\mathit{AKT1} \in [-3.2, -1.2] \land \mathit{ALDH1A1} \in [-8.1, -2.3] \land \mathit{ASS1} \in [-4.4, -0.4] \land \mathit{CENPK} \in [-6.1, -4.4] \land \mathit{CHEK2} \in [-5.5, -3.8] \land \mathit{CXCR4} \in [-6.6, -2.8] \land \mathit{CYP1B1} \in [-11.3, -4.5] \land \mathit{ERBB3} \in [-5.5, -3.5] \land \mathit{ERBB4} \in [-10.6, -7.6] \land \mathit{FGFR3} \in [-7.1, -3.1] \land \mathit{FOSL1} \in [-5.3, -0.7] \land \mathit{GNAI1} \in [-3.2, -0.4] \land \mathit{GPI} \in [-2.0, 0.2] \land \mathit{HIF1A} \in [-6.8, -4.7] \land \mathit{KDR} \in [-5.5, -3.5] \land \mathit{LDHA} \in [-6.9, -4.3] \land \mathit{LGALS1} \in [-2.6, 1.0] \land \mathit{LIMD1} \in [-6.2, -4.7] \land \mathit{LOX} \in [-4.1, -0.9] \land \mathit{MCM6} \in [-5.9, -3.7] \land \mathit{MET} \in [-5.3, -2.8] \land \mathit{MMP7} \in [-9.2, -3.4] \land \mathit{MYC} \in [-2.8, 0.3] \land \mathit{NBN} \in [-3.7, -2.7] \land \mathit{PDK1} \in [-4.9, -3.5] \land \mathit{PFKFB3} \in [-3.9, -0.7] \land \mathit{RASSF6} \in [-7.6, -5.0] \land \mathit{RB1} \in [-4.4, -2.5] \land \mathit{SYK} \in [-6.0, -3.9] \land \mathit{SYNGR3} \in [-8.9, -4.7] \land \mathit{TPI1} \in [-7.2, -4.8] \land \mathit{XRCC1} \in [-5.4, -3.4] \land \mathit{CD44} \in [-0.1, 2.1]$”} & \emph{“$\mathit{ALDH1A1} \in [-8.1, -2.3] \land \mathit{ASS1} \in [-4.4, -0.4] \land \mathit{CYP1B1} \in [-11.3, -4.5] \land \mathit{ERBB3} \in [-5.5, -3.5] \land \mathit{ERBB4} \in [-10.6, -7.6] \land \mathit{FOSL1} \in [-5.3, -0.7] \land \mathit{GNAI1} \in [-3.2, -0.4] \land \mathit{GPI} \in [-2.0, 0.2] \land \mathit{HIF1A} \in [-6.8, -4.7] \land \mathit{KDR} \in [-5.5, -3.5] \land \mathit{LDHA} \in [-6.9, -4.3] \land \mathit{LGALS1} \in [-2.6, 1.0] \land \mathit{LIMD1} \in [-6.2, -4.7] \land \mathit{LOX} \in [-4.1, -0.9] \land \mathit{MCM6} \in [-5.9, -3.7] \land \mathit{MET} \in [-5.3, -2.8] \land \mathit{MMP7} \in [-9.2, -3.4] \land \mathit{MYC} \in [-2.8, 0.3] \land \mathit{NBN} \in [-3.7, -2.7] \land \mathit{PDK1} \in [-4.9, -3.5] \land \mathit{PFKFB3} \in [-3.9, -0.7] \land \mathit{RASSF6} \in [-7.6, -5.0] \land \mathit{RB1} \in [-4.4, -2.5] \land \mathit{SYK} \in [-6.0, -3.9] \land \mathit{SYNGR3} \in [-8.9, -4.7] \land \mathit{TPI1} \in [-7.2, -4.8] \land \mathit{XRCC1} \in [-5.4, -3.4] \land \mathit{CD44} \in [-0.1, 2.1]$”} & Subgroup size: $12\to12$ \newline\newline Exceptionality: $16.6102\to16.6079$ \\
        \midrule
        \emph{“$\mathit{ADM} \in [-5.3, -0.8] \land \mathit{AKT1} \in [-3.6, -1.7] \land \mathit{ALDH3A1} \in [-8.9, -0.5] \land \mathit{ANLN} \in [-4.3, -2.8] \land \mathit{ASS1} \in [-4.9, -0.0] \land \mathit{ATP5G3} \in [-1.3, 1.4] \land \mathit{ATR} \in [-5.9, -3.8] \land \mathit{BCL2L1} \in [-7.8, -4.9] \land \mathit{BIRC5} \in [-4.7, -1.0] \land \mathit{CAV1} \in [-4.4, 0.3] \land \mathit{CDKN3} \in [-5.9, -2.5] \land \mathit{CLDN4} \in [-5.0, -0.6] \land \mathit{CXCL12} \in [-8.2, -3.2] \land \mathit{ERBB3} \in [-6.3, -3.2] \land \mathit{ERCC5} \in [-5.2, -3.6] \land \mathit{FAM83B} \in [-7.4, -1.7] \land \mathit{FLT1} \in [-5.7, -3.1] \land \mathit{GNAI1} \in [-5.2, -0.2] \land \mathit{IGF1R} \in [-3.5, -0.6] \land \mathit{INHBA} \in [-4.4, 1.9] \land \mathit{KDR} \in [-5.5, -2.8] \land \mathit{LDHA} \in [-7.5, -5.7] \land \mathit{LGALS1} \in [-2.1, 1.0] \land \mathit{LOX} \in [-6.4, -1.4] \land \mathit{MDM2} \in [-4.6, -2.2] \land \mathit{MMP13} \in [-9.3, 1.3] \land \mathit{MRE11A} \in [-6.1, -3.6] \land \mathit{MTOR} \in [-3.7, -2.3] \land \mathit{NOTCH1} \in [-5.9, -2.6] \land \mathit{P4HA2} \in [-4.7, -1.4] \land \mathit{PSMD9} \in [-4.2, -2.1] \land \mathit{PTEN} \in [-7.3, -5.6] \land \mathit{RAD23B} \in [-0.9, 0.8] \land \mathit{RB1} \in [-3.8, -2.3] \land \mathit{RPA2} \in [-4.9, -2.1] \land \mathit{SERPINB2} \in [-6.6, -1.2] \land \mathit{TCF3} \in [-3.5, -1.0] \land \mathit{TGFB1} \in [-2.6, -0.3] \land \mathit{TPI1} \in [-7.2, -5.0] \land \mathit{XRCC1} \in [-5.3, -2.6] \land \mathit{XRCC5} \in [-2.5, -0.8] \land \mathit{CD44} \in [-0.8, 2.5]$”} & \emph{“$\mathit{ANLN} \in [-4.3, -2.8] \land \mathit{ATR} \in [-5.9, -3.8] \land \mathit{CLDN4} \in [-5.0, -0.6] \land \mathit{CXCL12} \in [-8.2, -3.2] \land \mathit{ERBB3} \in [-6.3, -3.2] \land \mathit{ERCC5} \in [-5.2, -3.6] \land \mathit{LDHA} \in [-7.5, -5.7] \land \mathit{LGALS1} \in [-2.1, 1.0] \land \mathit{MDM2} \in [-4.6, -2.2] \land \mathit{MTOR} \in [-3.7, -2.3] \land \mathit{PTEN} \in [-7.3, -5.6] \land \mathit{RAD23B} \in [-0.9, 0.8] \land \mathit{RB1} \in [-3.8, -2.3] \land \mathit{SERPINB2} \in [-6.6, -1.2] \land \mathit{TCF3} \in [-3.5, -1.0] \land \mathit{TGFB1} \in [-2.6, -0.3] \land \mathit{TPI1} \in [-7.2, -5.0]$”} & Subgroup size: $28\to29$ \newline\newline Exceptionality: $16.0098\to15.7934$ \\
        \midrule
        \emph{“$\mathit{AKT1} \in [-3.6, -1.5] \land \mathit{ALDH3A1} \in [-8.9, -0.5] \land \mathit{ANLN} \in [-4.5, -2.6] \land \mathit{ASS1} \in [-4.9, -0.0] \land \mathit{ATP5G3} \in [-1.3, 1.7] \land \mathit{ATR} \in [-5.9, -3.7] \land \mathit{BCL2L1} \in [-7.7, -5.0] \land \mathit{BIRC5} \in [-5.3, -1.3] \land \mathit{BNIP3L} \in [-3.1, -0.6] \land \mathit{CAV1} \in [-4.4, 0.4] \land \mathit{CDKN3} \in [-5.8, -2.5] \land \mathit{CLDN4} \in [-5.0, -0.7] \land \mathit{CXCL12} \in [-8.2, -2.2] \land \mathit{ERBB3} \in [-6.2, -3.2] \land \mathit{ERBB4} \in [-10.6, -7.4] \land \mathit{ERCC5} \in [-5.2, -3.6] \land \mathit{FAM83B} \in [-6.8, -1.6] \land \mathit{GNAI1} \in [-5.2, -0.2] \land \mathit{HIF1A} \in [-7.7, -4.7] \land \mathit{IGF1R} \in [-3.5, -0.6] \land \mathit{INHBA} \in [-4.8, 1.8] \land \mathit{KDR} \in [-5.5, -3.6] \land \mathit{LDHA} \in [-7.5, -5.3] \land \mathit{LGALS1} \in [-2.6, 1.0] \land \mathit{LOX} \in [-6.4, -1.7] \land \mathit{MCM6} \in [-5.9, -3.0] \land \mathit{MMP13} \in [-9.3, 1.3] \land \mathit{MRGBP} \in [-5.0, -2.8] \land \mathit{MTOR} \in [-3.8, -2.3] \land \mathit{P4HA2} \in [-4.7, -1.4] \land \mathit{PDK1} \in [-4.9, -2.3] \land \mathit{PGAM1} \in [-2.6, -0.7] \land \mathit{PSMD9} \in [-4.2, -2.0] \land \mathit{PTEN} \in [-7.3, -5.5] \land \mathit{RAD23B} \in [-0.8, 0.8] \land \mathit{RB1} \in [-3.9, -2.3] \land \mathit{TCF3} \in [-3.4, -1.4] \land \mathit{TGFB1} \in [-2.6, -0.0] \land \mathit{TP53} \in [-4.3, -1.0] \land \mathit{TPI1} \in [-7.2, -4.9] \land \mathit{CD44} \in [-0.4, 2.1]$”} & \emph{“$\mathit{ALDH3A1} \in [-8.9, -0.5] \land \mathit{CLDN4} \in [-5.0, -0.7] \land \mathit{ERBB3} \in [-6.2, -3.2] \land \mathit{ERBB4} \in [-10.6, -7.4] \land \mathit{HIF1A} \in [-7.7, -4.7] \land \mathit{LDHA} \in [-7.5, -5.3] \land \mathit{MMP13} \in [-9.3, 1.3] \land \mathit{MRGBP} \in [-5.0, -2.8] \land \mathit{P4HA2} \in [-4.7, -1.4] \land \mathit{PDK1} \in [-4.9, -2.3] \land \mathit{PGAM1} \in [-2.6, -0.7] \land \mathit{PTEN} \in [-7.3, -5.5] \land \mathit{RAD23B} \in [-0.8, 0.8] \land \mathit{TP53} \in [-4.3, -1.0] \land \mathit{TPI1} \in [-7.2, -4.9]$”} & Subgroup size: $25\to26$ \newline\newline Exceptionality: $15.8073\to15.2069$ \\
        \bottomrule
    \end{tabular}
\end{table*}

\clearpage

\begin{table*}[]
    \centering
    \tiny
    \caption{Rules learned on the postoperative RCT cohort before and after pruning using a Jaccard similarity threshold of $0.95$ for membership indicators along with the respective changes in subgroup sizes and exceptionality.}
    \label{tab:prune2}
    \begin{tabular}{p{0.4\textwidth}|p{0.4\textwidth}|p{0.1\textwidth}}
        \toprule
        \textbf{Original rule} & \textbf{Pruned rule} & \textbf{Changes} \\
        \midrule
        \emph{“$\mathit{ALDH1A1} \in [-7.0, -3.2] \land \mathit{BCL2L1} \in [-6.9, -5.5] \land \mathit{BIRC5} \in [-4.6, -1.6] \land \mathit{BNIP3L} \in [-2.5, -1.1] \land \mathit{CA9} \in [-9.0, -4.4] \land \mathit{CBX4} \in [-5.3, -3.7] \land \mathit{CLDN4} \in [-5.3, -1.3] \land \mathit{CXCL12} \in [-5.2, -1.1] \land \mathit{EGLN3} \in [-4.4, -0.9] \land \mathit{ENO1} \in [-0.2, 1.2] \land \mathit{ENO2} \in [-5.8, -2.4] \land \mathit{EPHA1} \in [-7.2, -4.6] \land \mathit{GPI} \in [-1.7, -0.2] \land \mathit{HSPB1} \in [-4.0, -2.0] \land \mathit{ITGB2} \in [-4.7, -2.3] \land \mathit{KCTD11} \in [-2.3, 0.1] \land \mathit{MAP2K2} \in [-4.3, -3.3] \land \mathit{MCM6} \in [-5.8, -4.0] \land \mathit{MMP13} \in [-1.9, 1.7] \land \mathit{MMP9} \in [-2.7, 0.6] \land \mathit{MPRS17} \in [-5.0, -1.2] \land \mathit{PFKFB3} \in [-2.7, -0.2] \land \mathit{PTEN} \in [-6.6, -5.1] \land \mathit{RAD23B} \in [-1.0, 0.9] \land \mathit{RB1} \in [-5.0, -2.5] \land \mathit{RELA} \in [-2.9, -1.6] \land \mathit{SERPINB2} \in [-7.6, -2.6] \land \mathit{SLC5A1} \in [-9.9, -3.6] \land \mathit{TCF3} \in [-3.8, -2.3] \land \mathit{XRCC1} \in [-5.2, -3.2] \land \mathit{XRCC4} \in [-6.1, -4.2] \land \mathit{XRCC5} \in [-2.1, -0.6]$”} & \emph{“$\mathit{EPHA1} \in [-7.2, -4.6] \land \mathit{GPI} \in [-1.7, -0.2] \land \mathit{KCTD11} \in [-2.3, 0.1] \land \mathit{MAP2K2} \in [-4.3, -3.3] \land \mathit{MMP13} \in [-1.9, 1.7] \land \mathit{MMP9} \in [-2.7, 0.6] \land \mathit{XRCC5} \in [-2.1, -0.6]$”} & Subgroup size: $12\to12$ \newline\newline Exceptionality: $27.8189\to27.7435$ \\
        \midrule
        \emph{“$\mathit{ANXA5} \in [-3.0, -0.9] \land \mathit{ATM} \in [-3.8, -1.6] \land \mathit{ATP5G3} \in [-1.1, 1.0] \land \mathit{BNIP3} \in [-6.1, -2.6] \land \mathit{BSG} \in [-9.4, -5.4] \land \mathit{CD24} \in [-8.8, -4.5] \land \mathit{CXCL12} \in [-6.2, -1.3] \land \mathit{DCBLD1} \in [-7.5, -4.5] \land \mathit{DKK3} \in [-5.3, -1.4] \land \mathit{FANCA} \in [-6.7, -3.1] \land \mathit{FN1} \in [-4.6, 1.8] \land \mathit{HIF1A} \in [-7.1, -4.6] \land \mathit{HK2} \in [-6.7, -2.6] \land \mathit{HSPA4} \in [-3.9, -2.2] \land \mathit{INHBA} \in [-7.9, -1.3] \land \mathit{ITGB1} \in [-2.5, -0.1] \land \mathit{KRT17} \in [-3.2, 2.8] \land \mathit{LGALS1} \in [-3.4, -0.2] \land \mathit{LOXL2} \in [-6.4, -3.0] \land \mathit{MCM6} \in [-5.3, -2.5] \land \mathit{MDM2} \in [-4.3, -0.7] \land \mathit{MMP10} \in [-9.7, -1.8] \land \mathit{MMP13} \in [-10.4, -2.8] \land \mathit{MMP2} \in [-5.0, -0.5] \land \mathit{MTOR} \in [-3.9, -2.3] \land \mathit{MYC} \in [-3.7, -0.6] \land \mathit{MYNN} \in [-5.1, -3.0] \land \mathit{NR1D2} \in [-4.9, -2.3] \land \mathit{P4HA2} \in [-5.1, -2.4] \land \mathit{PSMD9} \in [-4.3, -2.2] \land \mathit{RMI2} \in [-5.1, -2.9] \land \mathit{RPA2} \in [-4.5, -2.1] \land \mathit{SFN} \in [-1.6, 3.1] \land \mathit{SLC3A2} \in [-4.3, -2.3] \land \mathit{SMDT1} \in [-3.9, -2.2] \land \mathit{SNAI1} \in [-7.6, -3.8] \land \mathit{SPP1} \in [-9.0, -3.9] \land \mathit{TCF3} \in [-4.0, -2.1] \land \mathit{TP53} \in [-3.8, -1.0] \land \mathit{XRCC4} \in [-5.2, -3.5] \land \mathit{XRCC5} \in [-2.8, -0.9] \land \mathit{YAP1} \in [-3.6, -0.2]$”} & \emph{“$\mathit{ATM} \in [-3.8, -1.6] \land \mathit{HIF1A} \in [-7.1, -4.6] \land \mathit{KRT17} \in [-3.2, 2.8] \land \mathit{MMP13} \in [-10.4, -2.8] \land \mathit{MMP2} \in [-5.0, -0.5] \land \mathit{PSMD9} \in [-4.3, -2.2] \land \mathit{RMI2} \in [-5.1, -2.9] \land \mathit{SNAI1} \in [-7.6, -3.8] \land \mathit{XRCC4} \in [-5.2, -3.5] \land \mathit{YAP1} \in [-3.6, -0.2]$”} & Subgroup size: $47\to45$ \newline\newline Exceptionality: $12.0529\to11.9264$ \\
        \midrule
        \emph{“$\mathit{ALDH1A1} \in [-7.0, -3.4] \land \mathit{BSG} \in [-8.9, -3.9] \land \mathit{CA9} \in [-9.5, -4.5] \land \mathit{CBX4} \in [-5.3, -3.7] \land \mathit{CLDN4} \in [-4.9, -0.4] \land \mathit{CXCL12} \in [-4.3, -1.5] \land \mathit{ENO1} \in [-0.3, 1.2] \land \mathit{ENO2} \in [-5.8, -2.4] \land \mathit{EPHA1} \in [-7.2, -4.6] \land \mathit{EPOR} \in [-10.2, -6.6] \land \mathit{ERCC5} \in [-5.2, -3.0] \land \mathit{HIF1A} \in [-5.7, -3.7] \land \mathit{LOX} \in [-2.9, -1.0] \land \mathit{MMP9} \in [-2.8, 0.6] \land \mathit{MYNN} \in [-5.2, -2.8] \land \mathit{PGK1} \in [-2.0, -0.2] \land \mathit{PTEN} \in [-6.5, -5.1] \land \mathit{RAD23B} \in [-1.0, 1.1] \land \mathit{RAD50} \in [-3.6, -1.6] \land \mathit{RB1} \in [-5.0, -2.4] \land \mathit{SLC5A1} \in [-10.0, -3.3] \land \mathit{TCF3} \in [-4.0, -2.1] \land \mathit{XRCC1} \in [-5.2, -4.1] \land \mathit{XRCC4} \in [-6.2, -4.1] \land \mathit{XRCC5} \in [-2.2, -1.0]$”} & \emph{“$\mathit{ALDH1A1} \in [-7.0, -3.4] \land \mathit{CA9} \in [-9.5, -4.5] \land \mathit{CBX4} \in [-5.3, -3.7] \land \mathit{ENO1} \in [-0.3, 1.2] \land \mathit{EPHA1} \in [-7.2, -4.6] \land \mathit{LOX} \in [-2.9, -1.0] \land \mathit{MMP9} \in [-2.8, 0.6] \land \mathit{MYNN} \in [-5.2, -2.8] \land \mathit{SLC5A1} \in [-10.0, -3.3] \land \mathit{TCF3} \in [-4.0, -2.1] \land \mathit{XRCC1} \in [-5.2, -4.1]$”} & Subgroup size: $12\to12$ \newline\newline Exceptionality: $27.4009\to26.0718$ \\
        \midrule
        \emph{“$\mathit{ATM} \in [-4.7, -1.6] \land \mathit{BIRC5} \in [-4.2, -1.0] \land \mathit{BNIP3L} \in [-3.2, -1.2] \land \mathit{BSG} \in [-9.2, -3.9] \land \mathit{CA9} \in [-10.7, -2.9] \land \mathit{CBX4} \in [-5.3, -3.1] \land \mathit{CD24} \in [-8.3, -4.5] \land \mathit{CDKN3} \in [-6.1, -2.7] \land \mathit{CXCL12} \in [-7.5, -2.0] \land \mathit{DCBLD1} \in [-7.5, -4.5] \land \mathit{ENO2} \in [-6.2, -2.4] \land \mathit{ERCC5} \in [-5.7, -3.0] \land \mathit{HK2} \in [-6.7, -2.2] \land \mathit{HSPA4} \in [-4.0, -2.2] \land \mathit{KDR} \in [-6.0, -2.4] \land \mathit{KRT17} \in [-3.0, 5.1] \land \mathit{LOX} \in [-7.1, -1.5] \land \mathit{LOXL2} \in [-6.4, -1.8] \land \mathit{MCM6} \in [-4.8, -2.5] \land \mathit{MME} \in [-9.6, -4.0] \land \mathit{MRE11A} \in [-6.1, -4.0] \land \mathit{MTOR} \in [-4.2, -2.3] \land \mathit{MYC} \in [-3.7, -0.5] \land \mathit{MYNN} \in [-5.0, -2.3] \land \mathit{NBN} \in [-3.9, -1.6] \land \mathit{NOTCH1} \in [-6.3, -1.9] \land \mathit{RAD23B} \in [-1.4, 1.1] \land \mathit{RAD50} \in [-4.0, -1.6] \land \mathit{RASSF6} \in [-8.1, -1.6] \land \mathit{RB1} \in [-5.0, -2.3] \land \mathit{RELA} \in [-3.5, -1.4] \land \mathit{SDHA} \in [-6.2, -2.9] \land \mathit{SERPINB2} \in [-8.0, -0.6] \land \mathit{TCF3} \in [-3.9, -2.1] \land \mathit{TP53} \in [-5.0, -1.0] \land \mathit{XPA} \in [-5.3, -2.6] \land \mathit{XPC} \in [-5.7, -3.7] \land \mathit{XRCC1} \in [-5.2, -2.5] \land \mathit{XRCC4} \in [-5.8, -3.5] \land \mathit{XRCC5} \in [-2.8, -0.8]$”} & \emph{“$\mathit{BIRC5} \in [-4.2, -1.0] \land \mathit{BSG} \in [-9.2, -3.9] \land \mathit{CA9} \in [-10.7, -2.9] \land \mathit{CD24} \in [-8.3, -4.5] \land \mathit{DCBLD1} \in [-7.5, -4.5] \land \mathit{LOX} \in [-7.1, -1.5] \land \mathit{LOXL2} \in [-6.4, -1.8] \land \mathit{MCM6} \in [-4.8, -2.5] \land \mathit{MME} \in [-9.6, -4.0] \land \mathit{MTOR} \in [-4.2, -2.3] \land \mathit{MYC} \in [-3.7, -0.5] \land \mathit{MYNN} \in [-5.0, -2.3] \land \mathit{RB1} \in [-5.0, -2.3] \land \mathit{TCF3} \in [-3.9, -2.1] \land \mathit{XPC} \in [-5.7, -3.7] \land \mathit{XRCC1} \in [-5.2, -2.5]$”} & Subgroup size: $85\to87$ \newline\newline Exceptionality: $11.2782\to11.1260$ \\
        \bottomrule
    \end{tabular}
\end{table*}

\clearpage

%% file: texfigs/additional_synth_panel.tex
\begin{tikzpicture}
    \scriptsize
    \begin{groupplot}[
        group style={
                group size=4 by 1,
                horizontal sep=55pt,
            },
            width=\textwidth/3.85,
            ylabel style={yshift=5pt},
            xlabel style={yshift=-5pt},
            ylabel={$F_1$},
            pretty grid line,
            ytick={0.2,0.4,0.6,0.8},
            extra y ticks={0,0,1,0},
            extra y tick style={grid=none},
            ymin=0,
            ymax=1,
            cycle list name=pr-colors-conf,
        ]

        \nextgroupplot[
            pretty line,
            pretty fill legend,
            pretty grid line,
            extra y ticks={0,18000},
            extra y tick style={grid=none},
            extra y tick labels={0,5},
            ylabel={Runtime (h)},
            xlabel={Number of features $p$},
            xmode=log,
            ymin=0,
            width=\textwidth/3.85,
            ymax=18000,
            log basis x=10,
            scaled x ticks=base 10:1,
            ytick={3600, 7200, 10800, 14400},
            yticklabels={1, 2, 3, 4},
            ylabel style={yshift=28pt},
            xlabel style={xshift=-28pt},
            cycle list name=pr-colors-conf,
        ]
            \foreach \i in {sysurv, rulekit, esmamds, fibers} {
                \addplot+[very thick] table[x=x, y=y] {data/exps/1kdims_runtime_\i.tsv};
                \addplot+[forget plot, draw=none, name path=low] table[x=x, y=y_c0] {data/exps/1kdims_runtime_\i.tsv};
                \addplot+[forget plot, draw=none, name path=up] table[x=x, y=y_c1] {data/exps/1kdims_runtime_\i.tsv};
                \addplot fill between[of=low and up];
            }

        \nextgroupplot[
            pretty line,
            pretty fill legend,
            xtick={1,5,9},
            xticklabels={0.2,1.0,1.8},
            xlabel={Population-Subgroup hazard ratio},
            cycle list name=pr-colors-conf,
        ]
            \foreach \i in {sysurv, rulekit, esmamds, fibers} {
                \addplot+[very thick] table[x=x, y=y] {data/exps/scale_f1_\i.tsv};
                \addplot+[forget plot, draw=none, name path=low] table[x=x, y=y_c0] {data/exps/scale_f1_\i.tsv};
                \addplot+[forget plot, draw=none, name path=up] table[x=x, y=y_c1] {data/exps/scale_f1_\i.tsv};
                \addplot fill between[of=low and up];
            }

        \nextgroupplot[
            pretty line,
            pretty fill legend,
            xlabel={Number of subjects $n$},
            xmode=log,
            log basis x=2,
            ylabel style={yshift=118pt},
            xlabel style={xshift=-118pt},
            cycle list name=pr-colors-conf,
            legend style={
                at={(1.4,0.4)},
                anchor=west,
                legend columns=1,
            },
            legend entries={\ourmethod,,\RK,,\EDS,,\FIBERS},
        ]
            \foreach \i in {sysurv, rulekit, esmamds, fibers} {
                \addplot+[very thick] table[x=x, y=y] {data/exps/dataset_size_f1_\i.tsv};
                \addplot+[forget plot, draw=none, name path=low] table[x=x, y=y_c0] {data/exps/dataset_size_f1_\i.tsv};
                \addplot+[forget plot, draw=none, name path=up] table[x=x, y=y_c1] {data/exps/dataset_size_f1_\i.tsv};
                \addplot fill between[of=low and up];
            }
    \end{groupplot}
\end{tikzpicture}